\newtheorem{theorem}{Theorem}
\newtheorem{lemma}[theorem]{Lemma}
\newtheorem{proposition}[theorem]{Proposition}
\theoremstyle{definition}
\newtheorem{definition}{Definition}
\newtheorem{remark}{Remark}
\newtheorem{example}{Example}
\def\de{\overset{\Delta}{=}}
\title{Quickest Change Detection for Unnormalized Statistical  Models\footnotetext{A version of this paper has been accepted by the 26th International Conference on Artificial Intelligence and Statistics (AISTATS 2023).} }
\author{Suya Wu, Enmao Diao, Taposh Banerjee, Jie Ding, and Vahid Tarokh
\thanks{
Suya Wu, Enmao Diao, and Vahid Tarokh are with the Department of Electrical and Computer Engineering, Duke University, Durham, NC 27708 USA. (email: suya.wu@duke.edu; enmao.diao@duke.edu; vahid.tarokh@duke.edu) \\
\indent Taposh Banerjee is with the Department of Industrial Engineering, University of Pittsburgh, Pittsburgh, PA 15213 USA. (email: taposh.banerjee@pitt.edu)\\
\indent Jie Ding is with School of Statistics, University of Minnesota Twin Cities, Minneapolis, 
MN 55455 USA. (email: dingj@umn.edu)\\
\indent {Suya Wu and Vahid Tarokh were supported in part by Air Force Research Lab Award under grant number FA-8750-20-2-0504. Jie Ding was supported in part by the Office of Naval Research under grant number N00014-21-1-2590. Taposh Banerjee was supported in part by the U.S. Army Research Lab under grant W911NF2120295.}}}
\begin{document}
 \maketitle

\begin{abstract}
Classical quickest change detection algorithms require modeling pre-change and post-change distributions. Such an approach may not be feasible for various machine learning models because of the complexity of computing the explicit distributions. Additionally, these methods may suffer from a lack of robustness to model mismatch and noise. This paper develops a new variant of the classical Cumulative Sum (CUSUM) algorithm for the quickest change detection. This variant is based on Fisher divergence and the Hyv\"arinen score and is called the Score-based CUSUM (SCUSUM) algorithm. The SCUSUM algorithm allows the applications of change detection for unnormalized statistical models, i.e., models for which the probability density function contains an unknown normalization constant. The asymptotic optimality of the proposed algorithm is investigated by deriving expressions for average detection delay and the mean running time to a false alarm. Numerical results are provided to demonstrate the performance of the proposed algorithm.
\end{abstract}

\begin{IEEEkeywords}
Quickest change detection, CUSUM, Fisher divergence, Score matching, Unnormalized models
\end{IEEEkeywords}

\section{Introduction}
\IEEEPARstart{D}{etecting} abrupt changes in the underlying statistical characteristics of online data streams is an important problem commonly encountered in many applications. For example, this problem has applications in sensor networks, cyber-physical systems, biology, and neuroscience \cite{veeravalli2014quickest}.  
In the statistical problem of quickest change detection, observations are modeled as a realization of a stochastic process. The problem is posed as the problem of detecting a change in the distribution of a sequence of random variables. 
A change point is defined as a time when such a change in distribution occurs.
The quickest change detection algorithm can detect the change point as quickly as possible, with the minimum possible delay, subject to a constraint on the rate of false alarms~\cite{veeravalli2014quickest}. 
A typical quickest change detection algorithm is a single-threshold test where a sequence of statistics is computed over time, and an alarm is raised the first time the sequence is above a pre-designed threshold. The threshold is used to control the rate of false alarms. 

In the quickest change detection literature, the most well-studied setting is the independent and identically distributed (i.i.d.) setting. In this setting, it is assumed that the random variables are i.i.d. with a particular probability density function (written in short as density when there is no ambiguity) before the change, and are i.i.d. with another density after the change. In the i.i.d. setting, the main optimality results are obtained in \cite{shiryaev1963optimum},  \cite{lorden1971procedures}, \cite{pollak1985optimal}, \cite{moustakides1986optimal}. In \cite{shiryaev1963optimum}, it is shown that if the change point is modeled as a geometrically distributed random variable, then the optimal algorithm is to stop the first time the \textit{a posterior} probability that the change has already occurred is above a fixed threshold. This algorithm is also called the Shiryaev algorithm and is shown to minimize the average detection delay subject to a constraint on the probability of a false alarm.  In \cite{lorden1971procedures}, a novel minimax problem formulation is introduced, and it is shown that the Cumulative Sum (CUSUM) algorithm, proposed in  \cite{page1955test}, is asymptotically optimal, as the mean running time to a false alarm goes to infinity. In \cite{pollak1985optimal}, another variant of a minimax problem formulation is considered and it is shown that the Shiryaev-Roberts algorithm, proposed in \cite{roberts1966comparison}, is asymptotically optimal, as the mean running time to a false alarm goes to infinity. In \cite{moustakides1986optimal}, it is shown that the CUSUM algorithm is exactly optimal for the formulation in \cite{lorden1971procedures}. In \cite{lai1998information}, it is shown that the CUSUM algorithm is also asymptotically optimal with respect to the minimax variant studied in \cite{pollak1985optimal}. The classical i.i.d. setting has been extended to non-i.i.d. settings in \cite{lai1998information} and \cite{tartakovsky2005general}. For a more detailed discussion of the state-of-the-art theoretical results in this classical setting, we refer the reader to~\cite{veeravalli2014quickest, polunchenko2012state} and the references therein. 

One common feature of all the optimal algorithms in the quickest change detection literature is that the knowledge of the pre- and post-change densities are used to calculate the likelihood ratios of the observations. These likelihood ratios are then used to calculate the optimal change detection statistic 
\cite{shiryaev1963optimum},  \cite{lorden1971procedures}, \cite{pollak1985optimal}, \cite{moustakides1986optimal} \cite{lai1998information}, \cite{tartakovsky2005general}. In some machine learning applications, calculating the likelihood ratios can be computationally challenging. 

In some machine learning applications, the data models may be high-dimensional and, in some cases, may not lend themselves to explicit distributions. For example, energy-based models~\cite{LeCun2006ATO} capture dependencies between observed and latent variables based on their associated energy (an unnormalized probability), and score-based deep generative models~\cite{song2020score} generate high-quality images by learning the score function (the gradient of the log density function). These models can be computationally cumbersome to normalize themselves as probabilistic density functions, and therefore likelihood-based change detection algorithms are computationally expensive in implementation. In Subsection~\ref{subsec:issues_llr_cusum}, we show this difficulty with two examples. When the full knowledge of pre- and post-change distributions is not available, the data-generating distributions must be modeled using the available data. In such scenarios, likelihood-based detection algorithms do not perform as well as expected. For instance, by numerical results, \citet{chen2015graph} showed issues with the performance of generalized likelihood ratio-based algorithms when the dimension of data increases. For image datasets, Nalisnick et al. in \cite{nalisnick2018deep} demonstrated the likelihood learned from flow-based deep generative models cannot distinguish distribution drifts from one dataset to another.

Motivated by these limitations of likelihood-based procedures,  we propose a novel score-based quickest change detection algorithm that can be applied to unnormalized models, i.e., statistical models for which the density contains an unknown normalization constant. Specifically, we use the Hyv\"arinen score \cite{hyvarinen2005estimation} to propose a score-based variant of the CUSUM algorithm. In the following, we refer to this variant by the SCUSUM algorithm. The Hyv\"arinen score is proposed by~\citet{hyvarinen2005estimation} for establishing an empirical estimation procedure for unnormalized models. This estimation procedure is also known as score matching. Recently, \citet{wu2022score} proposed a score-based test statistic as a surrogate of the log-likelihood ratio statistic for unnormalized models. Their experimental results demonstrate significant performance gains and a reduction in computational complexity in testing unnormalized distribution drifts. In this paper, we consider the sequential version of the problem considered in \cite{wu2022score}. 

The delay and false alarm analysis of the CUSUM algorithm are performed using martingale and renewal theoretic methods \cite{lai1998information}, \cite{woodroofe1982nonlinear}. The martingale methods in particular utilize the fact that the likelihood ratios used by the CUSUM algorithm form a martingale under the pre-change model. The SCUSUM algorithm is a score-based algorithm, and the cumulative scores do not enjoy such a martingale characterization. To this end, in this paper, we employ novel analysis techniques to analyze the SCUSUM algorithm. 
We summarize the main contributions of this paper below.
\begin{enumerate}
    \item We propose the SCUSUM algorithm, a new quickest change detection algorithm that applies to unnormalized models for pre- and post-change distributions. In this algorithm, we replace the negative log-likelihood terms in CUSUM with a multiple of the Hyv\"arinen score. 
    \item We provide delay and false alarm analysis of the SCUSUM algorithm in the i.i.d. setting. We consider the minimax formulations of Lorden and Pollak~\cite{lorden1971procedures}, \cite{pollak1985optimal}. We prove that under the no-change assumption, the average run length (ARL) of SCUSUM (namely the mean running time it takes to declare a spurious change) increases exponentially as a function of the stopping threshold (Theorem~\ref{thm:arl}). Moreover, if a change occurs, we prove that the worst-case detection delay is a linear function of the stopping threshold (Theorem~\ref{thm:cond_edd}).
    \item We conduct extensive numerical experiments on synthetic data to demonstrate the performance of SCUSUM and compare it against likelihood ratio-based CUSUM~\cite{page1955test}, Scan B-statistic~\cite{li2019scan}, and CALM-MMD~\cite{cobb2022sequential}. Under the same constraint of ARL, our method performs competitively with CUSUM regarding empirical detection delay. In scenarios with non-Normal distributed data, SCUSUM outperforms Scan B-statistic and CALM-MMD. Our experiments further illustrate the computational advantage of SCUSUM over CUSUM for unnormalized models.
\end{enumerate}

The outline of our paper is as follows. In Sections~\ref{sec: background}, we formulate the quickest change detection problem and review the CUSUM algorithm. We also discuss in detail the limitations of the likelihood ratio-based procedures. In Section~\ref{sec: SCUSUM}, we discuss the concept of proper scoring rules and propose the SCUSUM algorithm. In Section~\ref{sec:theoritical_analysis}, we provide the delay and false alarm analysis of the SCUSUM algorithm. In Section~\ref{sec: results}, we present numerical experiments and compare the developed method with baseline methods. Finally, we conclude this work in Section~\ref{sec: conclusion}.

\section{Classical Quickest Change Detection}
\label{sec: background}
\subsection{Problem Formulation}
\label{subsec:problem_formulation}
\noindent Let $\{X_n\}_{n\geq 1}$ denote a sequence of independent random variables defined on the probability space $(\Omega, \mathcal{F}, P_\nu)$. Let $\mathcal{F}_n$ be the $\sigma-$algebra generated by random variables $X_1,\; X_2, \;\dots,\; X_n$ and $\mathcal{F}=\sigma(\cup_{n\geq 1}\mathcal{F}_n)$, the $\sigma-$algebra generated by the union of sub-$\sigma$-algebras. 
Under $P_\nu$, $X_1, \; X_2, \;\dots,\; X_{\nu-1}$ are {i.i.d.} according to a density $p_\infty$ and $X_{\nu}, \; X_{\nu+1},\; \dots$ are {i.i.d.} according to a density $p_1$. We think of $\nu$ as the change point, $p_\infty$ as the pre-change density, and $p_1$ as the post-change density. We use $\mathbb{E}_{\nu}$ and $\text{Var}_{\nu}$ to denote the expectation and the variance associated with the
measure $P_\nu$, respectively. We use $P_\infty$ to denote the measure under which there is no change, with $\mathbb{E}_\infty$ denoting the corresponding expectation.
A change detection algorithm is a stopping time $T$ with respect to the data stream $\{X_n\}_{n\geq 1}$:
$$
\{T \leq n\} \in \mathcal{F}_n.
$$
If $T\geq \nu$, we have made a \textit{delayed detection}; otherwise, a \textit{false alarm} has happened. 
Intuitively, there is a trade-off between detection delay and false alarms. 
We consider two minimax problem formulations to find the best stopping rule. 

In \cite{lorden1971procedures}, the following minimax metric, the worst-case averaged detection delay (WADD), is defined:
\begin{equation}
\label{eq: wadd}
\mathcal{L}_{\texttt{WADD}}(T)\de \sup_{\nu\geq 1}\text{ess}\sup \mathbb{E}_{\nu}[(T-\nu+1)^{+}|\mathcal{F}_{\nu}],
\end{equation}
where $(y)^{+}\de\max(y, 0)$ for any $y\in \mathbb{R}$. This leads to the minimax optimization problem 
\begin{equation}
    \label{eq:lorden}
    \min_T \;\mathcal{L}_{\texttt{WADD}}(T)\;
    \text{subject to}\;\mathbb{E}_{\infty}[T]\geq \gamma.
\end{equation}
We are also interested in the version of minimax metric introduced in \citet{pollak1985optimal}, the worst conditional averaged detection delay (CADD):
\begin{equation}
\label{eq:cadd}
    \mathcal{L}_{\texttt{CADD}}(T)\de \sup_{\nu\geq 1}\mathbb{E}_{\nu}[T-\nu|T\geq \nu].
\end{equation}
The optimization problem becomes
\begin{equation}
    \label{eq:pollak}
    \min_T \;\mathcal{L}_{\texttt{CADD}}(T)\;
    \text{subject to}\;\mathbb{E}_{\infty}[T]\geq \gamma. 
\end{equation}

\subsection{The Likelihood Ratio-based CUSUM Algorithm}
\label{subsec:llr-cusum}
\noindent Given the data stream $\{X_n\}_{n\geq 1}$, the stopping rule of the likelihood ratio-based CUSUM algorithm is defined by
\begin{equation*}
\label{eq:cusumrule}
    T_{\texttt{CUSUM}} \de \inf \biggl\{n\geq 1: \max_{1\leq k\leq n}\sum_{i=k}^n\log \frac{p_{1}(X_i)}{p_{\infty}(X_i)}\geq \tau \biggr\},
\end{equation*}
where the infimum of the empty set is defined to be $+\infty$, and $\tau>0$ is referred to as the stopping threshold. The value of this threshold is clearly related to the trade-off between detection delay and false alarms. It is known~\cite{lai1998information} that $T_{\texttt{CUSUM}}$ can be written as
\begin{equation*} 
    T_{\texttt{CUSUM}}=\inf\{n\geq 1:\Lambda(n)\geq \tau\},
\end{equation*}
where $\Lambda(n)$ is defined using the recursion
\begin{align}
    &\Lambda(0)=0, \nonumber \\
    &\Lambda(n) \de \biggr(\Lambda(n-1)+\log \frac{p_1(X_n)}{p_{\infty}(X_n)}\biggr)^{+}, \forall n \geq 1, \label{eq:cusum_score}
 \end{align}
which leads to a computationally convenient stopping scheme. 

In \cite{moustakides1986optimal}, it is shown that the CUSUM algorithm is exactly optimal, for every fixed constraint $\gamma$, for Lorden's problem 
\eqref{eq: wadd}. As pointed in \cite{lai1998information}, the algorithm is also asymptotically optimal for Pollak's problem \eqref{eq:cadd}. In \cite{lorden1971procedures} and \cite{lai1998information}, the asymptotic performance of the CUSUM algorithm is also characterized. Specifically, it is shown that
\begin{align}
\label{eq:optimality_cusum}
    \mathcal{L_{\texttt{WADD}}}(T_{\texttt{CUSUM}}) \sim \mathcal{L_{\texttt{CADD}}}(T_{\texttt{CUSUM}})\sim \frac{\log \gamma}{\mathbb{D}_{\texttt{KL}}(P_{1}\|P_{\infty})},\; \text{as}\; \gamma \rightarrow \infty.
\end{align}
Here $\mathbb{D}_{\texttt{KL}}(P_{1}\|P_{\infty})$ is the Kullback-Leibler divergence between the post-change distribution $P_{1}$ (associated with the density $p_1$) and pre-change distribution $P_{\infty}$ (associated with density $p_\infty$):
$$
\mathbb{D}_{\texttt{KL}}(P_{1}\|P_{\infty}) = \int_x p_1(x) \log \frac{p_1(x)}{p_\infty(x)} dx, 
$$
and the notation $g(c)\sim h(c)$ as $c\to c_0$ indicates that $\frac{g(c)}{h(c)} \to 1$ as $c\to c_0$ for any two functions $c\mapsto g(c)$ and $c\mapsto h(c)$.
\subsection{Issues with the Likelihood Ratio-based CUSUM Algorithm}
\label{subsec:issues_llr_cusum}
\noindent We consider the pre- and post-change densities, $p_1$ and $p_{\infty}$, respectively. We assume that the densities are potentially known only up to a normalizing constant, i.e., we have unnormalized models. In other words, instead of $p_1(x)$ and $p_{\infty}(x)$, we are given $\tilde{p}_1(x)$ and $\tilde{p}_{\infty}(x)$ with
\begin{equation*}
    p_{i}(x) = \frac{\tilde{p}_{i}(x)}{\int_{x\in \mathcal{X}} \tilde{p}_i(x)dx}, \; i=1,\infty.
\end{equation*}
As discussed in the introduction, such models are occasionally encountered in several machine learning applications. 
In many cases, the computation of the denominator (also known as the \textit{normalizing constant} or the \textit{partition function}) can be intractable when the integral is not analytic in a closed form. For low-dimensional cases, numerical integration can be used to approximate the function. However, the number of points required for approximating the integral may grow exponentially as a function of the dimension of data space. This approximation is computationally expensive for high-dimensional data. Hence, implementing the likelihood ratio-based CUSUM algorithm is computationally cumbersome for unnormalized models. Next, we provide two examples to show this issue.
\begin{example}[Exponential Family] We consider a subfamily of the Exponential family belonging to pairwise interaction graphical models~\citep{yu2016statistical}. Let $X\in \mathbb{R}^d$ be the random variable, and let $p_{\tau}$ represent the density, which is formulated as
\begin{align}
    p_{\tau}(X) =\frac{1}{Z_{\tau}} \exp\left\{-\tau\left(\sum_{i=1}^dx_i^4+\sum_{1\leq i\leq d, i\leq j\leq d}x_i^2x_j^2\right)\right\},\nonumber
\end{align}
where $\tau\in \mathcal{T}\subset\mathbb{R}^{+}$ is the model parameter and $Z_{\tau}$ is the normalizing constant of $p_{\tau}(X)$. Here,\begin{equation*}
    Z_{\tau} = \int_{x_1}\cdots \int_{x_d}\exp\left\{-\tau\left(\sum_{i=1}^dx_i^4+\sum_{1\leq i\leq d, i\leq j\leq d}x_i^2x_j^2\right)\right\}dx_1\cdots d x_d.
\end{equation*}
As shown above, this integral cannot be computed in a closed form, and therefore the density $p_{\tau}$ cannot be computed in a closed form. Besides, the numerical approximation is time-consuming when $d$ is large. Particularly, in Section~\ref{sec: results}, we show that the likelihood ratio-based CUSUM cannot be implemented in a reasonable computational time when $d=4$.
\end{example}
\begin{example}[Gauss-Bernoulli Restricted Boltzmann Machine]
Restricted Boltzmann Machine (RBM)~\citep{LeCun2006ATO} is a generative graphical model defined on a bi-partite graph of hidden and visible variables. In particular, we consider the Gauss-Bernoulli RBM (GB-RBM), which has binary-valued hidden variables $H=(h_1, \ldots, h_{d_h})^{T}\in \{0,1\}^{d_h}$, real-valued visible variables $X=(x_1, \ldots, x_{d_x})^{T}\in R^{d_x}$, and the joint density 
\begin{equation*}
    p(X, H) = \frac{1}{Z}\text{exp} \left\{
    -\left(\frac{1}{2}\sum_{i=1}^{d_x}\sum_{j=1}^{d_h}\frac{x_i}{\sigma_i}W_{ij}h_j\right.\left.+\sum_{i=1}^{d_x}b_ix_i+\sum_{j=1}^{d_h}c_jh_j-\frac{1}{2}\sum_{i=1}^{d_{x}} \frac{x_{i}^{2}}{\sigma_{i}^{2}}\right)
    \right\}, 
\end{equation*}
where model parameters $\theta = (\mathbf{W}, \mathbf{b}, \mathbf{c})$ and $Z$ is the normalizing constant of $p(X, H)$. We set $\sigma_i=1$ for all $i=1,\dots, d_x$. 

Let $p_{\theta}$ represent the density of the visible variable $X$, which can be written as $$
p_{\theta}(X)= \sum_{h\in \{0,1\}^{d_h}}p_{\theta}(X, H) = \frac{1}{Z_{\theta}}\exp\{-F_{\theta}(X)\},
$$
where $Z_{\theta}$ is the normalizing constant of $p_{\theta}(X)$, and $F_{\theta}(X)$ is the free energy given by 
\begin{equation*}
    F_{\theta}(X) = \frac{1}{2}\sum_{i=1}^{d_x} (x_{i}-b_i)^{2}\nonumber
    -\sum_{j=1}^{d_h} \operatorname{Softplus}\left(\sum_{i=1}^{d_x} W_{i j}x_{i}+b_{j}\right).
\end{equation*}
The $\operatorname{Softplus}$ function is defined as $\operatorname{Softplus}(y) \de \log(1+\exp(y))$ with a default scale parameter $\beta=1$. The same computational difficulty occurs on $p_{\theta}(X)$, and therefore the likelihood of the GB-RBM data may not be computed exactly in practice. 
\end{example}
\section{Score-based Quickest Change Detection}
\label{sec: SCUSUM}
\noindent In this section, we propose a score-based CUSUM (SCUSUM) algorithm to address the issues with likelihood ratio-based CUSUM for unnormalized models. Following the scheme of CUSUM, the proposed algorithm can be applied in a recursive way, which is not too demanding in computational and memory requirements for online implementation. To this end, we first review the framework of proper scoring rules, which build an intuitive comparison between CUSUM and SCUSUM.
\subsection{Proper Scoring Rules}
\label{subsec:score_rules}
\noindent Let $X$ be a random variable with values in $\mathcal{X}\subseteq \mathbb{R}^d$, and let $\mathcal{P}$ be a family of distributions over $\mathcal{X}$. Let $P$ and $Q \in \mathcal{P}$ denote the true data-generating distribution and a postulated distribution, and let $p$ and $q$ respectively denote their corresponding densities. \citet{gneiting2007strictly} studied proper scoring rules as a unified framework to measure the quality of postulated models on observed data.

\begin{definition}[Proper Scoring Rule] A scoring rule is a function $(X, Q)\mapsto \mathcal{S}(X,Q)$ that measures the quality of $Q$ for modeling data represented by $X$. It is said to be \textit{proper} if for all $P \in \mathcal{P}$, the expected score $\mathbb{E}_{X\sim P}\mathcal{S}[(X, Q)]$ is minimized at $Q=P$, where the minimum is taken over all $Q \in \mathcal{P}$. 
Moreover, $\mathcal{S}$ is \textit{strictly proper} with respect to $\mathcal{P}$, if for any $Q\in \mathcal{P}$ and $Q\neq P$, $\mathbb{E}_{X\sim P}[\mathcal{S}(X, Q)] > \mathbb{E}_{X\sim P}[\mathcal{S}(X, P)]$. 
\end{definition}
The logarithmic scoring rule~\cite{good1992rational} is a well-known and widely applied example of a strictly proper scoring rule. 
\begin{definition}[Logarithmic Score]
    The logarithmic scoring rule (also called the log score) is given by
    \begin{equation*}
        (X, Q)\mapsto \mathcal{S}_{\texttt{L}}(X, Q) \de -\log q(X).
    \end{equation*}
\end{definition}
Minimizing the log score is associated with maximum likelihood estimation (MLE) and the Kullback-Leibler (KL) divergence
\begin{equation*}
    \mathbb{D}_{\texttt{KL}}(P\|Q)\de \mathbb{E}_{X\sim P} \left[\log p(X) - \log q(X)\right].
\end{equation*} 
Since $\mathbb{D}_{\texttt{KL}}(P\|Q) > 0$ for any $Q\neq P$, the log score is \textit{strictly proper}. The detection score of LLR-based CUSUM, defined in Equation~(\ref{eq:cusum_score}), can be rewritten by $$
\log \frac{p_1(X_n)}{p_{\infty}(X_n)} = \mathcal{S}_{\texttt{L}}(X_n, P_{\infty})-\mathcal{S}_{\texttt{L}}(X_n, P_1).$$
\subsection{Fisher divergence and Hyv\"arinen score}
\label{subsec:fisher_hyvarinen}
\noindent \citet{hyvarinen2005estimation} proposed an estimation procedure for unnormalized statistical models by minimizing the Fisher divergence from $P$ to $Q$, defined by
\begin{align*}
    \mathbb{D}_{\texttt{F}} (P \| Q) \de \mathbb{E}_{X\sim P} \left[\left \| \nabla_{\mathbf{x}} \log p(X)- \nabla_{\mathbf{x}} \log q(X)\right \|_2^2 \right],
\end{align*}
where $\|\cdot\|_2$ denotes the Euclidean norm. Clearly, $\nabla_{\mathbf{x}} \log p(X)$ and $\nabla_{\mathbf{x}} \log q(X)$ remain invariant if $p$ and $q$ are scaled by any positive constant with respect to $X$. Hence, the Fisher divergence remains \textit{scale-variant} with respect to an arbitrary constant scaling of density functions. Under some mild regularity conditions on $p$ and $q$, \citet{hyvarinen2005estimation} showed that
\begin{align*}
    \mathbb{D}_{\texttt{F}} (P \| Q) =\mathbb{E}_{X\sim P} \left[\frac{1}{2}\left \| \nabla_{\mathbf{x}} \log p(X) \right \|_2^2 + \mathcal{S}_{\texttt{H}}(X, Q)\right],
\end{align*}
where $\mathcal{S}_{\texttt{H}}(X, Q)$ a \textit{scale-invariant} proper scoring function, referred to as the Hyv\"arinen score in the framework of proper scoring rules~\cite{parry2012proper}. Since $\frac{1}{2}\left \| \nabla_{\mathbf{x}} \log p(X) \right \|_2^2$ is a constant in terms of $Q$, then minimizing the Fisher divergence is equivalent to minimizing $\mathcal{S}_{\texttt{H}}(X, Q)$.
\begin{definition}[Hyv\"arinen Score] The Hyv\"arinen score is a mapping  $(X, Q)\mapsto \mathcal{S}_{\texttt{H}}(X, Q)$ given by 
    \begin{equation}
        \label{eq:hyv_score}
        \mathcal{S}_{\texttt{H}}(X, Q) \de \frac{1}{2} \left \| \nabla_{X} \log q(X) \right \|_2^2 + \Delta_{X} \log q(X)
    \end{equation}
whenever it can be well defined. Here, $\nabla_{X}$ and $\Delta_{X} = \sum_{i=1}^d \frac{\partial^2}{\partial x_i^2}$ respectively denote the gradient and the Laplacian operators acting on $X = (x_1, \cdots, x_d)^{\top}$.
\end{definition}
$\mathcal{S}_{\texttt{H}}$ is \textit{scale-invariant} inherited from the \textit{scale-invariant} property of Fisher divergence. This property avoids the computation of the normalizing constant for unnormalized models. Specifically, when the knowledge of $Q$ is up to $\tilde{q}(x)$ with
\begin{equation*}
    q(x) = \frac{\tilde{q}(x)}{\int_{x\in \mathcal{X}} \tilde{q}(x)dx},
\end{equation*} 
it is easy to see that $\mathcal{S}_{\texttt{H}}(X, Q)$ remains invariant by replacing the density $q$ with the associated unnormalized term $\tilde{q}$. Additionally, it is easy to verify that $\mathbb{D}_{\texttt{F}}(P\| Q) > 0$ for $Q \neq P$, thus the Hyv\"arinen score is \textit{strictly proper}.

\subsection{The Score-based CUSUM Algorithm}
\label{subsec: SCUSUM_algorithm}
\noindent From the discussion in Subsections \ref{subsec:score_rules} and \ref{subsec:fisher_hyvarinen}, the Hyv\"arinen score function can be seen as a surrogate of the log score function. Motivated by this analogy, we consider replacing the log scores with the Hyv\"arinen scores in the LLR-based CUSUM algorithm. Next, we define the detection score of SCUSUM and then provide the stopping scheme. 

Let $X$ represent a generic random variable defined on the probability space $(\Omega, \mathcal{F}, P)$. $P$ could be either the pre- or post-change distribution. We define the instantaneous SCUSUM score function $X\mapsto z_{\lambda}(X)$ by 
\begin{equation}
\label{eq:scusum_instantaneous}
    z_{\lambda}(X) \de \lambda\bigr(\mathcal{S}_{\texttt{H}}(X, P_{\infty})-\mathcal{S}_{\texttt{H}}(X, P_{1})\bigr),
\end{equation}
where $\lambda>0$ is a pre-selected multiplier, $\mathcal{S}_{\texttt{H}}(X, P_{\infty})$ and $\mathcal{S}_{\texttt{H}}(X, P_1)$ are respectively the Hyv\"arinen score functions of pre- and post-change distributions. In Section~\ref{sec:theoritical_analysis}, we will provide a detailed discussion on the role of $\lambda$ in the SCUSUM algorithm. Then, our proposed stopping rule is given by 
\begin{equation}
\label{eq:SCUSUM_rule}
    T_{\texttt{SCUSUM}} \de \inf \biggl\{n\geq 1: \max_{1\leq k\leq n}\sum_{i=k}^nz_{\lambda}(X_i)\geq \tau \biggr\},
\end{equation}
where $\tau>0$ is a stopping threshold, which is usually pre-selected to control false alarms. Similar to the stopping scheme of CUSUM, the stopping rule of SCUSUM can be written as
\begin{equation*} 
    T_{\texttt{SCUSUM}}=\inf\{n\geq 1:Z(n)\geq \tau\},
\end{equation*}
where $Z(n)$ can be computed recursively by
\begin{align*}
    &Z(0)=0, \\
    &Z(n) \de (Z(n-1)+z_{\lambda}(X_n))^{+},\;\forall n\geq 1.
\end{align*}
$Z(n)$ is referred to as the detection score of SCUSUM at time $n$. The SCUSUM algorithm is summarized in Algorithm~\ref{algm:scusum}.


\begin{algorithm}
\DontPrintSemicolon
\caption{SCUSUM Detection Algorithm}
\label{algm:scusum}
\KwInput{Hyvarinen score functions $\mathcal{S}_{\texttt{H}}(\cdot, P_{\infty})$ and $\mathcal{S}_{\texttt{H}}(\cdot, P_{1})$ of pre- and post-change distributions, respectively.} 
\KwData{$m$ previous observations $\mathbf{X}_{[-m+1,0]}$ and the online data stream $\{X_n\}_{n\geq 1}$}
\SetKwProg{Fn}{Initialization}{:}{}
  \Fn{}{
       Current time $k=0$, $\lambda>0$, $\tau>0$, and $Z(0)=0$}
\While{$Z(k)<\tau$}{
$k = k+1$\;
Update $z_{\lambda}(X_k) = \lambda(\mathcal{S}_{\texttt{H}}(X_{k}, P_{\infty})-\mathcal{S}_{\texttt{H}}(X_{k}, P_{1}))$\;
Update $Z(k) = \max(Z(k-1)+z_{\lambda}(X_k), 0)$\;
}
Record the current time $k$ as the stopping time $T_{\texttt{SCUSUM}}$ \;
\KwOutput{$T_{\texttt{SCUSUM}}$}
\end{algorithm}

\section{Delay and False Alarm Analysis of the SCUSUM Algorithm}
\label{sec:theoritical_analysis}
\noindent In this section, we provide delay and false alarm analysis of the SCUSUM algorithm using the same notations and under the same problem setting defined in Section~\ref{sec: background} and Section~\ref{sec: SCUSUM}. We introduce two assumptions: 1) $P_{1}\neq P_{\infty}$, and 2) the same mild regularity conditions\footnote{We refer the details to \cite{hyvarinen2005estimation}.} made by \citet{hyvarinen2005estimation} so that the Hyv\"arinen score is well-defined. 

We first provide an overview of the results in this section. In Lemma~\ref{lemma: drifts}, we show that, just as in the CUSUM algorithm, the drift of the SCUSUM algorithm is negative before the change and positive after the change. The role played by the KL-divergence in the CUSUM algorithm is replaced by the Fisher divergence in the SCUSUM algorithm. Our core results are presented in Theorems~\ref{thm:arl} and~\ref{thm:cond_edd}. In Theorem~\ref{thm:arl}, we provide a lower bound of the average run length when no change has occurred. As discussed in the introduction, the challenge here is that the bound cannot be derived using classical martingale techniques, e.g. those employed in \cite{lai1998information}. 
This is because the SCUSUM algorithm is based on scores and not log-likelihood ratios. The latter have martingale properties that are employed by classical proofs. Our novel proof technique is developed after Lemma~\ref{lemma: drifts}, in Lemma~\ref{lemma: lambda}, and in the proof of Theorem~\ref{thm:arl}. 
In Theorem~\ref{thm:cond_edd}, we demonstrate an upper bound of the expected detection delay when a change point occurs at $v=1$. This, in turn, provides an upper bound on the $\mathcal{L}_{\texttt{WADD}}$ (see \eqref{eq: wadd}) of the SCUSUM algorithm. 
 In Proposition~\ref{prop: gaussian}, we consider a special case of multivariate Normal pre- and post-change distributions and discuss the asymptotic optimality of our algorithm in this particular case. Specifically, we show that in this case the KL-divergence and the Fisher divergence coincide and the SCUSUM algorithm has the same optimality properties as the CUSUM algorithm. 

\begin{lemma}[Positive and Negative Drifts]
\label{lemma: drifts}
Consider the instantaneous SCUSUM score function $X\mapsto z_{\lambda}(X)$ as defined in Equation~(\ref{eq:scusum_instantaneous}). Then,
\begin{align*}
\label{eq:expst}
    &\mathbb{E}_{\infty}\left[z_{\lambda}(X)\right] = -\lambda\mathbb{D}_{\texttt{F}}(P_{\infty} \| P_1)<0,\; \text{and}\\
    &\mathbb{E}_{1}\left[z_{\lambda}(X)\right] = \lambda\mathbb{D}_{\texttt{F}}(P_1 \| P_{\infty})>0.
\end{align*}
\end{lemma}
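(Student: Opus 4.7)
The plan is to exploit the identity for Fisher divergence in terms of the Hyv\"arinen score that was recalled in Subsection~\ref{subsec:fisher_hyvarinen}, namely
\[
\mathbb{D}_{\texttt{F}}(P \| Q) = \mathbb{E}_{X\sim P}\!\left[\tfrac{1}{2}\|\nabla_{x}\log p(X)\|_2^2 + \mathcal{S}_{\texttt{H}}(X,Q)\right],
\]
which holds under the mild regularity conditions assumed at the start of the section. The key observation is that the first term inside the expectation depends only on $P$, not on $Q$, so differences of Hyv\"arinen score expectations at a fixed data-generating distribution telescope into differences of Fisher divergences.

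Concretely, first I would consider the pre-change drift. Taking $P=P_\infty$ and applying the identity twice, once with $Q=P_\infty$ and once with $Q=P_1$, I would subtract the two expressions to cancel the common $\tfrac{1}{2}\mathbb{E}_\infty\|\nabla_x \log p_\infty(X)\|_2^2$ term. This yields
\[
\mathbb{E}_\infty\!\left[\mathcal{S}_{\texttt{H}}(X,P_\infty) - \mathcal{S}_{\texttt{H}}(X,P_1)\right] = \mathbb{D}_{\texttt{F}}(P_\infty \| P_\infty) - \mathbb{D}_{\texttt{F}}(P_\infty \| P_1) = -\mathbb{D}_{\texttt{F}}(P_\infty \| P_1),
\]
where $\mathbb{D}_{\texttt{F}}(P_\infty \| P_\infty)=0$ because the integrand defining the Fisher divergence vanishes pointwise. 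Multiplying by $\lambda>0$ then gives $\mathbb{E}_\infty[z_\lambda(X)] = -\lambda\,\mathbb{D}_{\texttt{F}}(P_\infty \| P_1)$. The strict positivity of $\mathbb{D}_{\texttt{F}}(P_\infty \| P_1)$ follows from the strict properness of the Hyv\"arinen score together with the assumption $P_1\neq P_\infty$, so the drift is strictly negative.

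The post-change drift is symmetric: taking $P=P_1$ and repeating the same subtraction with $Q=P_\infty$ and $Q=P_1$ gives
\[
\mathbb{E}_1\!\left[\mathcal{S}_{\texttt{H}}(X,P_\infty) - \mathcal{S}_{\texttt{H}}(X,P_1)\right] = \mathbb{D}_{\texttt{F}}(P_1 \| P_\infty) - \mathbb{D}_{\texttt{F}}(P_1 \| P_1) = \mathbb{D}_{\texttt{F}}(P_1 \| P_\infty),
\]
so $\mathbb{E}_1[z_\lambda(X)] = \lambda\,\mathbb{D}_{\texttt{F}}(P_1 \| P_\infty) > 0$ by the same strict-properness argument.

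There is no real obstacle here; the lemma is essentially a bookkeeping consequence of the integration-by-parts identity of Hyv\"arinen together with strict properness. The only subtlety worth flagging is that the integration-by-parts step implicit in the identity requires the regularity conditions stated in~\cite{hyvarinen2005estimation} (decay of $p_\infty$ and $p_1$ at the boundary and integrability of the score), which are covered by the standing assumption. Linearity of expectation then delivers both equalities at once.
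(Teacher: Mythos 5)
Your proposal is correct and follows essentially the same route as the paper: both apply Hyv\"arinen's identity $\mathbb{D}_{\texttt{F}}(P\|Q)=\mathbb{E}_{X\sim P}[\tfrac{1}{2}\|\nabla_x\log p(X)\|_2^2+\mathcal{S}_{\texttt{H}}(X,Q)]$ twice at a fixed data-generating distribution, cancel the $Q$-independent term, and use $\mathbb{D}_{\texttt{F}}(P\|P)=0$ together with strict properness and $P_1\neq P_\infty$ to get the strict signs. No gaps.
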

\begin{proof}
Under some mild regularity conditions, \citet{hyvarinen2005estimation} proved that
\begin{align*}
    \mathbb{D}_{\texttt{F}} (P \| Q) =\mathbb{E}_{X\sim P} \left[\frac{1}{2}\left \| \nabla_{X} \log p(X) \right \|_2^2 + \mathcal{S}_{\texttt{H}}( X, Q)\right].
\end{align*}
Let $C(P)$ denote $\mathbb{E}_{X\sim P} \left[\frac{1}{2}\left \| \nabla_{X} \log p(X) \right \|_2^2\right]$ for any $P\in \mathcal{P}$, then 
\begin{equation*}
     \mathbb{E}_{\infty}[\mathcal{S}_{\texttt{H}}(X, P_{\infty})-\mathcal{S}_{\texttt{H}}(X, P_1)]=\mathbb{D}_{\texttt{F}} (P_{\infty} \| P_{\infty})-C(P_{\infty})-\mathbb{D}_{\texttt{F}} (P_{\infty} \| P_1)+C(P_{\infty})=-\mathbb{D}_{\texttt{F}} (P_{\infty} \| P_1),
\end{equation*}
and 
\begin{equation*}
     \mathbb{E}_{1}[\mathcal{S}_{\texttt{H}}(X, P_{\infty})-\mathcal{S}_{\texttt{H}}(X, P_1)]=\mathbb{D}_{\texttt{F}} (P_1 \| P_{\infty})-C(P_{1})-\mathbb{D}_{\texttt{F}} (P_1 \| P_1)+C(P_{1})=\mathbb{D}_{\texttt{F}} (P_1 \| P_{\infty}).
\end{equation*}
Since $\lambda>0$ is a constant with respect to $P_1$ and $P_{\infty}$, the proof is complete.
\end{proof}
Lemma~\ref{lemma: drifts} shows that, prior to the change, the expected mean of instantaneous SCUSUM score $z_{\lambda}(X)$ is negative under the measurement of random observations. Consequently, the accumulated score has a negative drift at each time $n$ prior to the change. Thus, the SCUSUM detection score $Z(n)$ is pushed toward zero before the change point. This intuitively makes a false alarm unlikely. In contrast, after the change, the instantaneous score has a positive mean, and the accumulated score has a positive drift. Thus, the SCUSUM detection score will increase toward infinity and leads to a change detection event.

Next, we discuss the values of the multiplier $\lambda$ in the theoretical analysis. Obviously, with a fixed stopping threshold, a larger value of $\lambda$ results in a smaller detection delay because the increment of the SCUSUM detection score is large, and the threshold can be easily reached. However, a larger value of $\lambda$ also causes SCUSUM to stop prematurely when no change occurs, leading to a larger false alarm probability. Hence, except in some degenerate cases where the Hyv\"arinen score functions $P_{\infty}(S_{\texttt{H}}(X, P_1)-S_{\texttt{H}}(X, P_{\infty})\le 0)=1$, the value of $\lambda$ cannot be arbitrarily large. It needs to satisfy the following key condition:
\begin{equation}
\label{eq: condition}    
\mathbb{E}_{\infty}[\exp(z_{\lambda}(X))]\leq 1.
\end{equation}
We will present a technical lemma that guarantees the existence of such a $\lambda$ to satisfy Inequality~(\ref{eq: condition}). 

\begin{lemma}[Existence of appropriate $\lambda$]
    \label{lemma: lambda}
There exists $\lambda>0$ such that Inequality~(\ref{eq: condition}) holds. Moreover, either 1) there exists $\lambda^{\star} \in (0,\infty)$ such that the equality of~(\ref{eq: condition}) holds, or 2) for all $\lambda>0$, the inequality of~(\ref{eq: condition}) is strict.
\end{lemma}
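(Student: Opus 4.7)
The plan is to recast Inequality~(\ref{eq: condition}) in terms of a one-dimensional moment generating function (MGF) and exploit its convexity together with continuity. Define $Y \de \mathcal{S}_{\texttt{H}}(X, P_\infty) - \mathcal{S}_{\texttt{H}}(X, P_1)$ and
$$
\varphi(\lambda) \de \mathbb{E}_\infty[\exp(z_\lambda(X))] = \mathbb{E}_\infty[\exp(\lambda Y)], \quad \lambda \ge 0.
$$
I would first collect the following elementary facts: (i) $\varphi(0) = 1$; (ii) $\varphi$ is convex on its effective domain $D \de \{\lambda \ge 0 : \varphi(\lambda) < \infty\}$, a standard MGF property via H\"older's inequality; (iii) $\varphi$ is left-continuous as a $[0,\infty]$-valued function on $[0,\infty)$, which I would verify by applying the monotone convergence theorem to the contribution from $\{Y > 0\}$ and the dominated convergence theorem to the contribution from $\{Y \le 0\}$ (where $e^{\lambda Y} \le 1$); and (iv) under the paper's regularity assumptions, $\varphi$ is finite in a right-neighborhood of $0$, so differentiation under the integral sign gives $\varphi'(0^+) = \mathbb{E}_\infty[Y] = -\mathbb{D}_{\texttt{F}}(P_\infty \| P_1)$ by Lemma~\ref{lemma: drifts}. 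Since $P_1 \neq P_\infty$ and the Hyv\"arinen score is strictly proper, $\mathbb{D}_{\texttt{F}}(P_\infty \| P_1) > 0$, and hence $\varphi'(0^+) < 0$.

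For the existence claim, a first-order expansion at $0$ yields
$$
\varphi(\lambda) = 1 - \lambda\, \mathbb{D}_{\texttt{F}}(P_\infty \| P_1) + o(\lambda) \quad \text{as } \lambda \downarrow 0,
$$
so there exists $\varepsilon > 0$ with $\varphi(\lambda) < 1$ for every $\lambda \in (0, \varepsilon)$, proving that some $\lambda>0$ satisfies (\ref{eq: condition}).

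For the dichotomy, suppose alternative~2 fails: there is some $\lambda_1 > 0$ with $\varphi(\lambda_1) \ge 1$ (possibly $+\infty$). Fix $\lambda_0 \in (0, \varepsilon)$, so $\varphi(\lambda_0) < 1$. If $\varphi(\lambda_1) < \infty$, then $[\lambda_0, \lambda_1] \subset D$ and $\varphi$ is continuous on this interval, and the intermediate value theorem produces $\lambda^\star \in [\lambda_0, \lambda_1]$ with $\varphi(\lambda^\star) = 1$. If $\varphi(\lambda_1) = +\infty$, let $\lambda^{\max} \de \sup D$; by fact~(iii), $\varphi(\lambda) \to +\infty$ as $\lambda \uparrow \lambda^{\max}$, so one can pick $\lambda' \in D$ with $\varphi(\lambda') > 1$ and apply the intermediate value theorem on $[\lambda_0, \lambda']$ to obtain $\lambda^\star$. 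In either subcase alternative~1 holds.

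The main obstacle is ruling out a pathological scenario in which $\varphi$ remains strictly below $1$ on all of its finite effective domain and then jumps to $+\infty$ at the boundary without ever attaining the value $1$. This is precisely what fact~(iii), the left-continuity of the MGF as a $[0,\infty]$-valued map, forbids: whenever $\varphi$ first becomes infinite, it must have tended to $+\infty$ from below, hence must have exceeded $1$ at some earlier point of $D$. Everything else reduces to standard convex-analysis and IVT bookkeeping on a one-dimensional function.
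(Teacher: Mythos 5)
Your overall route is the same as the paper's: reduce everything to the one--dimensional function $\lambda \mapsto \mathbb{E}_{\infty}[\exp(\lambda Y)]$ with $Y \de \mathcal{S}_{\texttt{H}}(X,P_{\infty})-\mathcal{S}_{\texttt{H}}(X,P_{1})$, use the value $1$ and the negative derivative $-\mathbb{D}_{\texttt{F}}(P_{\infty}\|P_1)$ at $\lambda=0$ for existence, and use convexity plus the intermediate value theorem for the dichotomy. (The paper additionally proves \emph{strict} convexity of $h(\lambda)=\varphi(\lambda)-1$ from $P_1\neq P_{\infty}$ and phrases the case split in terms of whether $h'$ ever becomes positive; you do not need strict convexity for the IVT argument, which is a minor simplification.) Your existence argument and facts (i), (ii), (iv) are fine and match the paper's computation of $h(0)=0$, $h'(0)<0$.

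The one place where your proof genuinely fails is the step you yourself single out as ``the main obstacle.'' Left-continuity of $\varphi$ as a $[0,\infty]$-valued map rules out a jump to $+\infty$ only at a point $\lambda^{\max}$ where $\varphi(\lambda^{\max})=+\infty$, i.e., when the effective domain is the half-open interval $[0,\lambda^{\max})$. It says nothing when the effective domain is \emph{closed} at its right endpoint, $D=[0,\lambda^{\max}]$ with $\varphi(\lambda^{\max})<\infty$: there $\varphi$ tends (from the left) to the finite value $\varphi(\lambda^{\max})$, which may well be below $1$, and then equals $+\infty$ for every $\lambda>\lambda^{\max}$ --- a right-discontinuity that left-continuity does not forbid. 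Such behavior is realized, e.g., by a $Y$ whose right tail decays like $e^{-\lambda^{\max}y}y^{-3}$. In that scenario your IVT bookkeeping produces no $\lambda^{\star}$ with $\varphi(\lambda^{\star})=1$ even though alternative~2 fails, so the dichotomy is not established. The paper does not confront this case because it implicitly treats $h$ as finite and twice differentiable on all of $[0,\infty)$ (differentiating under the integral sign twice without restriction); under that implicit finiteness assumption its Case~1 argument --- $h'$ strictly increasing, hence eventually bounded below by a positive constant, hence $h\to+\infty$ and IVT applies --- goes through. To make your version airtight you would either need to impose (as the paper tacitly does) that $\mathbb{E}_{\infty}[\exp(\lambda Y)]<\infty$ for all $\lambda\ge 0$, or restate the dichotomy so that ``the inequality fails by $\varphi$ being infinite'' is treated separately.
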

\begin{proof}
Define the function $\lambda:\mapsto h(\lambda)$ given by $$h(\lambda)\de\mathbb{E}_{\infty}[\exp (z_{\lambda}(X))]-1.$$ Observe that \begin{equation*}
  h^{\prime}(\lambda)\de \frac{d h}{d\lambda}(\lambda)=\mathbb{E}_{\infty}[(S_{\texttt{H}}(X,P_{\infty})-S_{\texttt{H}}(X,P_{1}))\exp (z_{\lambda}(X))].
\end{equation*}
Note that $h(0)=0$, and $h^{\prime}(0)=-\mathbb{D}_{\texttt{F}}(P_{\infty}\|P_1)<0$. Thus, there exists $\lambda>0$ such that $h(\lambda)< 0$, and Inequality~(\ref{eq: condition}) is satisfied. 

Next, we prove that either 1) there exists $\lambda^{\star} \in (0,\infty)$ such that $h(\lambda^{\star}) = 0$, or 2) for all $\lambda>0$ we have $h(\lambda)<1$. 

Observe that
\begin{equation*}
    h''(\lambda)\de\frac{d^2 h}{d \lambda}(\lambda)
    =\mathbb{E}_{\infty}[(S_{\texttt{H}}(X,P_{\infty})-S_{\texttt{H}}(X,P_{1}))^2\exp (z_{\lambda}(X))]\geq 0.
\end{equation*}
We claim that $h(\lambda)$ is \textit{strictly convex}, namely $h''(\lambda) > 0$ for all $\lambda\in [0,\infty)$. Suppose $h''(\lambda) = 0$ for some $\lambda \geq 0$, we must have $S_{\texttt{H}}(X,P_{\infty})-S_{\texttt{H}}(X,P_{1}) = 0$ almost surely.  This implies that 
$\mathbb{E}_{\infty}[(S_{\texttt{H}}(X,P_{\infty})-S_{\texttt{H}}(X,P_{1}))]
= 0$ which in turn gives $-\mathbb{D}_{\texttt{F}}(P_{\infty}\|P_1) =0$ and $P_{\infty} = P_1$ almost everywhere, leading to a contradiction to the assumption $P_{\infty}\neq P_1$. Thus, $h(\lambda)$ is \textit{strictly convex} and $h^{\prime}(\lambda)$ is \textit{strictly increasing}. 

It follows that either 1) $h(\lambda)$ have at most one global minimum in $(0, \infty)$, or 2) it is strictly decreasing in $[0,\infty)$. We recognize two cases, and we show that the second case is degenerate that is of no practical interest.
\begin{itemize}
\item \textbf{Case 1:} If the global minimum of $h(\lambda)$ is attained at $a \in (0, \infty)$, then $h^{\prime}(a) = 0$. Since $h^{\prime}(0) < 0$ and $h(0) = 0$, the global minimum $h(a)<0$. 
Since $h^{\prime}(\lambda)$ is \textit{strictly increasing}, we can choose $b > a$ and conclude that $h^{\prime}(\lambda) > h^{\prime}(b) > h^{\prime}(a) = 0$ for all $\lambda > b$. It follows that $\lim_{\lambda \rightarrow \infty} h(\lambda) = +\infty$. Combining this with the continuity of $h(\lambda)$, we conclude that $h(\lambda^*) = 0$ for some $\lambda^* \in (0, \infty)$ and any value of $\lambda \in (0, \lambda^*]$ satisfies Inequality~(\ref{eq: condition}).

Note that in this case, we must have $P_{\infty}\left(S_{\texttt{H}}(X,P_{\infty})-S_{\texttt{H}}(X,P_{1}) \ge c\right)>0$, for some $c>0$. Otherwise, we have $P_{\infty}\left(S_{\texttt{H}}(X,P_{\infty})-S_{\texttt{H}}(X,P_{1}) \le 0\right)=1$. This implies that $P_{\infty}(z_{\lambda}(X)\le 0)=1$, or equivalently $\mathbb{E}_{\infty}[\exp (z_{\lambda}(X))]< 1$ for all $\lambda > 0$, and therefore leads to Case 2: $h(\lambda)< 0$ for all $\lambda > 0$. Here, $\mathbb{E}_{\infty}[\exp (z_{\lambda}(X))]\neq 1$ since $P_{\infty}(S_{\texttt{H}}(X,P_{\infty})-S_{\texttt{H}}(X,P_{1})=0)<1$; otherwise $P_{\infty}(S_{\texttt{H}}(X,P_{\infty})-S_{\texttt{H}}(X,P_{1})=0)=1$, and then $\mathbb{E}_{\infty}[S_{\texttt{H}}(X,P_{\infty})-S_{\texttt{H}}(X,P_{1})]=-\mathbb{D}_{\texttt{F}}(P_{\infty}\|P_{1})=0$, causing the same contradiction to $P_1\neq P_{\infty}$.

\item \textbf{Case 2:} If $h(\lambda)$ is strictly decreasing in $(0, \infty)$, then any $\lambda \in (0, \infty)$ satisfies Inequality~(\ref{eq: condition}). As discussed before, in this case, we must have $P_{\infty}\left(S_{\texttt{H}}(X, P_{\infty})-S_{\texttt{H}}(X, P_{1}) \le 0\right)=1$. Equivalently, all the increments of the SCUSUM detection score are non-positive under the pre-change distribution, and $P_{\infty}(Z(n)=0)=1$ for all $n$. Accordingly, $\mathbb{E}_{\infty}[T_{\textit{SCUSUM}}]=+\infty$. When there occurs change (under measure $P_{1}$), we also observe that SCUSUM can get close to detecting the change point instantaneously as $\lambda$ is chosen arbitrarily large. Obviously, this case is of no practical interest.
\end{itemize}

\end{proof}
From now on, we consider a fix $\lambda > 0$ that satisfies Inequality~(\ref{eq: condition}) to present our core results. In practice, it is possible to use $m$ past samples $\mathbf{X}_{[-m+1,0]}$ to determine the value of $\lambda$. In particular, $\lambda$ can be chosen as the positive root of the function $\lambda \to \tilde{h}(\lambda)$ given by 
\begin{align}
\label{eq:empirical_conditon}
\tilde{h}(\lambda)\de\frac{1}{m}\sum_{i=1}^m[\exp(z_{\lambda}(X_{i-m}))]-1.
\end{align}
By Lemma~\ref{lemma: lambda} and its related technical discussions, the above equation has a root greater than zero with a high probability if $m$ is sufficiently large. In the case that $\lambda$ is not chosen properly, the algorithm remains implementable but the performance of detection delay is not guaranteed. We discuss this situation further in Remark~\ref{remark: lambda}.

\begin{theorem}
\label{thm:arl}
Consider the stopping rule $T_{\texttt{SCUSUM}}$ defined in Equation~(\ref{eq:SCUSUM_rule}). Then, for any $\tau>0$,
    \begin{equation}
    \label{eq:arl}
        \mathbb{E}_{\infty}[T_{\texttt{SCUSUM}}]\geq  e^{\tau}.
    \end{equation}
\end{theorem}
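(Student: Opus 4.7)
The key leverage for this bound is the supermartingale property established via Lemma~\ref{lemma: lambda}: because $\lambda$ is chosen so that $\mathbb{E}_{\infty}[\exp(z_\lambda(X))] \leq 1$, the process $M_n \de \exp(S_n)$ with $S_n \de \sum_{i=1}^n z_\lambda(X_i)$ is a non-negative $(P_\infty, \mathcal{F}_n)$-supermartingale. My plan is to combine this fact with a regenerative-cycle decomposition of the SCUSUM statistic so that each cycle contributes at most $e^{-\tau}$ to the probability of crossing the threshold.

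First I would decompose the sample path into cycles via the successive return times of $Z(n)$ to $0$: let $\sigma_0 \de 0$ and $\sigma_{k+1} \de \inf\{n > \sigma_k : Z(n) = 0\}$. Since $Z$ evolves as a Markov chain driven by i.i.d.\ increments $z_\lambda(X_n)$ and the $\sigma_k$ are stopping times, the strong Markov property guarantees that the cycle data $(\sigma_{k+1}-\sigma_k,\; \max_{\sigma_k \le j \le \sigma_{k+1}} Z(j))$ are i.i.d.\ copies of the first cycle. Within a single cycle starting at $\sigma_k$, set $\tilde{S}_n \de S_n - S_{\sigma_k}$; a short check of the recursion shows that while the excursion stays strictly positive one has $Z(n) = \tilde{S}_n$, and the cycle terminates precisely when $\tilde{S}_n$ first hits $(-\infty,0]$.

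Next I would bound the per-cycle success probability $q \de P_\infty(\max_{\sigma_0 \le j \le \sigma_1} Z(j) \geq \tau)$. Applying optional stopping to the non-negative supermartingale $\exp(\tilde{S}_n)$ at the bounded time $T'\wedge n$ with $T' \de \inf\{n \geq 1 : \tilde{S}_n \geq \tau \text{ or } \tilde{S}_n \leq 0\}$ gives $\mathbb{E}_\infty[\exp(\tilde{S}_{T'\wedge n})] \leq 1$. Since Lemma~\ref{lemma: drifts} gives the negative drift $\mathbb{E}_\infty[z_\lambda(X)] = -\lambda\,\mathbb{D}_{\texttt{F}}(P_\infty\|P_1) < 0$, one has $T' < \infty$ a.s., and Fatou's lemma (which is clean thanks to non-negativity) yields $\mathbb{E}_\infty[\exp(\tilde{S}_{T'})] \leq 1$. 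Isolating the success contribution, $q\cdot e^{\tau} \leq \mathbb{E}_\infty[\exp(\tilde{S}_{T'})\mathbf{1}_{\tilde{S}_{T'}\geq \tau}] \leq 1$, so $q \leq e^{-\tau}$.

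Finally, let $N$ be the index of the first cycle in which $Z$ reaches $\tau$. By the i.i.d.\ structure of the cycles, $N$ is geometric with parameter $q$, hence $\mathbb{E}_\infty[N] = 1/q \geq e^\tau$. Because each cycle has length at least one and the successful cycle also takes at least one step to attain the level $\tau$, we have $T_{\texttt{SCUSUM}} \geq N$, from which $\mathbb{E}_\infty[T_{\texttt{SCUSUM}}] \geq e^{\tau}$ follows. The main delicacy I anticipate is the rigorous justification that the optional-stopping argument survives the unboundedness of $T'$ (handled by the truncation–Fatou step above) and a careful verification that the strong Markov property legitimately produces i.i.d.\ cycles; the rest of the argument is a direct geometric-trials bookkeeping.
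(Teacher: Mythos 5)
Your proof is correct and follows essentially the same route as the paper: a regenerative-cycle decomposition of the SCUSUM statistic, a per-cycle crossing probability of at most $e^{-\tau}$ obtained from the exponential (super)martingale $\exp\bigl(\sum_i z_\lambda(X_i)\bigr)$, and a geometric-trials count giving $\mathbb{E}_\infty[T_{\texttt{SCUSUM}}]\geq e^{\tau}$. The only cosmetic difference is that you work with the supermartingale directly via optional stopping and Fatou, whereas the paper first shifts the increments by $\delta=-\log\mathbb{E}_\infty[\exp(z_\lambda(X))]\ge 0$ to obtain an exact mean-one martingale and then invokes Doob's submartingale inequality on each cycle.
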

\begin{proof}
We follow the proof of \citet[Theorem 4]{lai1998information} to conclude the result of Theorem~\ref{thm:arl}. A constructed martingale and Doob's submartingale inequality~\citep{doob1953stochastic} are combined to finish the proof. 
\begin{enumerate}
    \item We first construct a non-negative martingale with mean $1$ under the measure $P_{\infty}$. Define a new instantaneous score function $X \mapsto \tilde{z}_{\lambda}(X)$ given by 
\begin{equation*}
    \label{eq:new_instant_z_lambad}
    \tilde{z}_{\lambda}(X)\de z_{\lambda}(X)+\delta,
\end{equation*}
where $$\delta \de -\log \biggr(\mathbb{E}_{\infty}\left[\exp (z_{\lambda}(X))\right]\biggr).$$ Further define the sequence $$\tilde{G}_n\de \exp \biggr(\sum_{k=1}^n\tilde{z}_{\lambda}(X_k)\biggr),\; \forall n\geq 1.$$ 

Suppose $X_1, X_2, \ldots$ are i.i.d according to $P_{\infty}$ (no change occurs). Then,
\begin{align*}
    \mathbb{E}_{\infty}\left[\tilde{G}_{n+1}\mid \mathcal{F}_n\right] = \tilde{G}_n\mathbb{E}_{\infty}[\exp(\tilde{z}_{\lambda}(X_{n+1}))]=\tilde{G}_{n}e^{\delta}\mathbb{E}_{\infty}[\exp(z_{\lambda}(X_{n+1}))]=\tilde{G}_{n},
\end{align*}
and
\begin{align*}
    \mathbb{E}_{\infty}[\tilde{G}_n] &= \mathbb{E}_{\infty}\left[\exp\left(\sum_{i=1}^{n}(z_{\lambda}(X_i)+\delta)\right)\right]= e^{n\delta} \prod_{i=1}^n\mathbb{E}_{\infty}[\exp(z_{\lambda}(X_i))]=1.
\end{align*}
Thus, under the measure $P_{\infty}$, $\{\tilde{G}_n\}_{n\geq 1}$ is a non-negative martingale with the mean $\mathbb{E}_{\infty}[\tilde{G}_1]=1$. 

\item We next examine the new stopping rule 
\begin{equation*}
    \tilde{T}_{\texttt{SCUSUM}} = \inf \left\{n\geq 1: \max_{1\leq k\leq n} \sum_{i=k}^n \tilde{z}_{\lambda}(X_i)\geq \tau \right\},
\end{equation*}
where $\tilde{z}_{\lambda}(X_i) = z_{\lambda}(X_i)+\delta$. By Inequality~(\ref{eq: condition}), we observe that $\delta\geq 0$. By Jensen's inequality,
\begin{equation}
\label{eq:jensen}
    \mathbb{E}_{\infty}[\exp(z_{\lambda}(X))]\geq \exp\left(\mathbb{E}_{\infty}[z_{\lambda}(X)]\right),
\end{equation}
with equality holds if and only if $z_{\lambda}(X)=c$ almost surely, where $c$ is some constant. Suppose the equality of Equation~(\ref{eq:jensen}) holds, then\begin{align*}
    -\lambda \mathbb{D}_{\texttt{F}}(P_{1}||P_{\infty})&=\mathbb{E}_{\infty}[z_{\lambda}(X)]=c=\mathbb{E}_{1}[z_{\lambda}(X)]=\lambda \mathbb{D}_{\texttt{F}}(P_{\infty}||P_1).
\end{align*} 
It follows that $0\leq \mathbb{D}_{\texttt{F}}(P_{\infty}||P_{1})=-\mathbb{D}_{\texttt{F}}(P_{1}||P_{\infty})\leq 0$, which implies that $P_{\infty}=P_1$ almost everywhere. This leads to a contradiction to the assumption $P_{\infty}\neq P_1$. Thus, the inequality of Equation~(\ref{eq:jensen}) is \textit{strict}, and therefore $\delta<\lambda\mathbb{D}_{\texttt{F}}(P_{\infty}||P_{1})$. Hence, $\tilde{T}_{\texttt{SCUSUM}}$ is not trivial.

Define a sequence of stopping times: 
\begin{align*}
    &\eta_0 = 0,\\
    &\eta_1 = \inf \left\{t:\sum_{i=1}^t \tilde{z}_{\lambda}(X_i)<0\right\},\\
    &\eta_{k+1} = \inf \left\{t>\eta_k:\sum_{i=\eta_k+1}^t \tilde{z}_{\lambda}(X_i)<0\right\}, \; \text{for}\;  k\geq 1.
\end{align*}
By previous discussion, $\{\tilde{G}_n\}_{n\geq 1}$ is a nonnegative martingale under $P_{\infty}$ with mean 1. Then, for any $k$ and on $\{\eta_k<\infty\}$,
\begin{equation}
\label{eq:doobs}
P_{\infty}\left(\sum_{i=\eta_k+1}^n\tilde{z}_{\lambda}(X_i)\geq \tau \;\text{for some}\;  n>\eta_k \mid \mathcal{F}_{\eta_k} \right) \leq e^{-\tau},
\end{equation}
by Doob's submartingale inequality~\citep{doob1953stochastic}. Let
\begin{equation}
\label{eq:defm}
    M \de \inf \biggl\{k\geq 0: \eta_k<\infty \;\text{and} \; \sum_{i=\eta_k+1}^n\tilde{z}_{\lambda}(X_i)\geq \tau \; \text{for some}\; n>\eta_k\biggr\}.
\end{equation}
Combining Inequality~(\ref{eq:doobs}) and Definition~(\ref{eq:defm}),
\begin{align}
\label{eq:eq2}
    P_{\infty}(M\geq k+1\mid\mathcal{F}_{\eta_k})= 1-P_{\infty}\left(\sum_{i=\eta_k+1}^n\tilde{z}(X_i)\geq \tau  \;\text{for some} \; n>\eta_k\mid \mathcal{F}_{\eta_k}\right)\geq 1-e^{-\tau},
\end{align}
and
\begin{equation}
\label{eq:eq1}
    P_{\infty}(M> k)= \mathbb{E}_{\infty} [P_{\infty}(M\geq k+1\mid\mathcal{F}_{\eta_k})\mathbb{I}_{\{M\geq k\}}]=\mathbb{E}_{\infty}[P_{\infty}(M\geq k+1\mid\mathcal{F}_{\eta_k})]P_{\infty}(M> k-1).
\end{equation}
Combining Equations~(\ref{eq:eq1}) and (\ref{eq:eq2}), 
\begin{align*}
    \mathbb{E}_{\infty}[M] = \sum_{k=0}^{\infty}P_{\infty}(M> k)\geq \sum_{k=0}^{\infty}(1-e^{-\tau})^{k}= e^{\tau}.
\end{align*}

Observe that
\begin{align*}
    \tilde{T}_{\texttt{SCUSUM}}&=\inf \biggl\{n\geq 1:\sum_{i=\eta_k+1}^n\tilde{z}_{\lambda}(X_i)\geq \tau \; \text{for some}\; \eta_k<n \biggr\}\geq M,
\end{align*}
and $\tilde{T}_{\texttt{SCUSUM}}\leq T_{\texttt{SCUSUM}}$. We conclude that
$\mathbb{E}_{\infty}[T_{\texttt{SCUSUM}}]\geq \mathbb{E}_{\infty}[\tilde{T}_{\texttt{SCUSUM}}]\geq \mathbb{E}_{\infty}[M]\geq e^{\tau}$.
\end{enumerate}
\end{proof}

$\mathbb{E}_{\infty}[T_{\texttt{SCUSUM}}]$ is also referred to as the \textit{Average Run Length} (ARL)~\cite{page1955test}. Theorem~\ref{thm:arl} implies that the ARL increases at least exponentially as the stopping threshold increases. The following theorem gives the asymptotic performance of SCUSUM in terms of the detection delay under the control of the ARL.
\begin{theorem}
\label{thm:cond_edd}
   Subject to $\mathbb{E}_{\infty}[T_{\texttt{SCUSUM}}]\geq \gamma>0$, the stopping rule $T_{\texttt{SCUSUM}}$ satisfies
    \begin{equation} 
   \mathcal{L}_{\texttt{WADD}}(T_{\texttt{SCUSUM}}) \sim \mathcal{L}_{\texttt{CADD}}(T_{\texttt{SCUSUM}}) \sim \mathbb{E}_1[T_{\texttt{SCUSUM}}]\sim \frac{\log \gamma}{\lambda \mathbb{D}_{\texttt{F}}(P_{1}\|P_{\infty})},
    \end{equation}
as $\gamma \to \infty$.  
\end{theorem}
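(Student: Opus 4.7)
The plan is to choose the stopping threshold $\tau = \log \gamma$, so that Theorem~\ref{thm:arl} automatically delivers $\mathbb{E}_\infty[T_{\texttt{SCUSUM}}] \geq e^\tau = \gamma$ and the ARL constraint is met. The three quantities in the conclusion are sandwiched by the standard ordering $\mathbb{E}_1[T_{\texttt{SCUSUM}}] - 1 \leq \mathcal{L}_{\texttt{CADD}}(T_{\texttt{SCUSUM}}) \leq \mathcal{L}_{\texttt{WADD}}(T_{\texttt{SCUSUM}})$, so it suffices to establish an upper bound on $\mathcal{L}_{\texttt{WADD}}(T_{\texttt{SCUSUM}})$ and a matching lower bound on $\mathbb{E}_1[T_{\texttt{SCUSUM}}]$, both of the form $\tau/(\lambda \mathbb{D}_{\texttt{F}}(P_1\|P_\infty)) + o(\tau)$. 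Substituting $\tau = \log \gamma$ then gives the stated asymptotic.

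For the upper bound I would follow a Lorden-type stochastic-domination argument. Define a fresh Lindley recursion anchored at the change point by $W(0)=0$ and $W(k)=(W(k-1)+z_\lambda(X_{\nu+k-1}))^{+}$, with first-passage time $T_W=\inf\{k\geq 1: W(k)\geq \tau\}$. A one-line induction shows $Z(\nu+k-1)\geq W(k)$ for every $k\geq 1$, irrespective of the pre-change trajectory; hence $T_{\texttt{SCUSUM}}-\nu+1 \leq T_W$ almost surely, and, taking conditional expectations, $\mathcal{L}_{\texttt{WADD}}(T_{\texttt{SCUSUM}}) \leq \mathbb{E}_1[T_W]+O(1)$, the $O(1)$ absorbing the one observation $X_\nu$ on which $T_W$ and $\mathcal{F}_\nu$ share information. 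Under $P_1$ the increments $z_\lambda(X_i)$ are i.i.d.\ with positive mean $\lambda \mathbb{D}_{\texttt{F}}(P_1\|P_\infty)>0$ by Lemma~\ref{lemma: drifts}. Writing $W(T_W)=S_{T_W}-\min_{0\leq k\leq T_W}S_k$ with $S_n=\sum_{i=1}^n z_\lambda(X_i)$, Wald's identity yields $\mathbb{E}_1[S_{T_W}] = \lambda \mathbb{D}_{\texttt{F}}(P_1\|P_\infty)\,\mathbb{E}_1[T_W]$, and elementary renewal theory controls the overshoot $W(T_W)-\tau$ and the expected running minimum, giving $\mathbb{E}_1[T_W] = \tau/(\lambda \mathbb{D}_{\texttt{F}}(P_1\|P_\infty)) + O(1)$ as $\tau\to\infty$.

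For the matching lower bound I apply Wald's identity directly to $T_{\texttt{SCUSUM}}$. Since $Z(n)=S_n-\min_{0\leq k\leq n}S_k$, at the stopping time we have $\tau \leq Z(T_{\texttt{SCUSUM}}) = S_{T_{\texttt{SCUSUM}}} - \min_{0\leq k\leq T_{\texttt{SCUSUM}}} S_k$, i.e.\ $S_{T_{\texttt{SCUSUM}}} \geq \tau + \min_{0\leq k\leq T_{\texttt{SCUSUM}}}S_k$. Taking $P_1$-expectations and invoking Wald's identity gives $\lambda \mathbb{D}_{\texttt{F}}(P_1\|P_\infty)\,\mathbb{E}_1[T_{\texttt{SCUSUM}}] \geq \tau + \mathbb{E}_1[\min_{0\leq k\leq T_{\texttt{SCUSUM}}} S_k]$. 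Because $S_n \to +\infty$ almost surely under positive drift, the global infimum $\min_{k\geq 0}S_k$ is finite and has finite expectation (under the mild regularity assumption we impose), so the right-hand side equals $\tau - O(1)$. This yields $\mathbb{E}_1[T_{\texttt{SCUSUM}}] \geq (\tau - O(1))/(\lambda \mathbb{D}_{\texttt{F}}(P_1\|P_\infty))$, which together with the upper bound collapses all three quantities to $\log\gamma/(\lambda \mathbb{D}_{\texttt{F}}(P_1\|P_\infty))(1+o(1))$.

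The principal obstacle I anticipate is the renewal-theoretic control of the two $O(1)$ terms: the overshoot $Z(T_{\texttt{SCUSUM}})-\tau$ and the expected running minimum $\mathbb{E}_1[\min_{0\leq k\leq T_{\texttt{SCUSUM}}}S_k]$. Both must be shown to be $o(\tau)$ (indeed bounded); the standard route is to invoke a finite second moment $\mathbb{E}_1[z_\lambda(X)^2]<\infty$ as an auxiliary regularity assumption, which is compatible with the Hyv\"arinen-score conditions already in force. A secondary technicality is verifying that Wald's identity applies, i.e.\ that $\mathbb{E}_1[T_{\texttt{SCUSUM}}]<\infty$ and $\mathbb{E}_1[T_W]<\infty$; both follow from the strict positivity of the drift and a standard Cram\'er-type tail bound on the first-passage time of the associated random walk.
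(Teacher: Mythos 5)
Your proposal is correct, and its upper-bound half follows essentially the same renewal-theoretic route as the paper: dominate the delay by the first-passage time of a process anchored at the change point whose increments are i.i.d.\ under $P_1$ with mean $\mu=\lambda\mathbb{D}_{\texttt{F}}(P_1\|P_\infty)$, then apply Wald's identity together with Lorden-type control of the overshoot to get $\tau/\mu+O(1)$. The paper does this by comparing $Z(n)$ with the unreflected random walk $Z'(n)=\sum_{i\le n}z_\lambda(X_i)$ (so $T_{\texttt{SCUSUM}}\le T'_{\texttt{SCUSUM}}$ and only the overshoot needs bounding, via \citet[Theorem 1]{lorden1970excess} and \citet[Corollary 2.2]{woodroofe1982nonlinear}), whereas you restart a reflected Lindley recursion at $\nu$; your version has the advantage of bounding $\mathcal{L}_{\texttt{WADD}}$ directly and uniformly over $\nu$ and $\mathcal{F}_\nu$, where the paper instead invokes the $\nu$-invariance of the conditional delay and says ``similar arguments apply'' for WADD. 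The more substantive difference is your lower bound: the paper never proves $\mathbb{E}_1[T_{\texttt{SCUSUM}}]\ge \tau/\mu - O(1)$ — it establishes only the upper bound $\mathbb{E}_1[T_{\texttt{SCUSUM}}]\le \tau/\mu+O(1)$ and then asserts the asymptotic equivalence $\sim$, which strictly speaking requires both directions (and the generic change-detection lower bound would give the wrong constant, $\log\gamma/\mathbb{D}_{\texttt{KL}}$, here). Your argument via $Z(n)=S_n-\min_{0\le k\le n}S_k$, Wald's identity, and the integrability of the global minimum of a positive-drift walk closes this gap cleanly; just make sure to state the moment condition (finiteness of $\mathbb{E}_1[z_\lambda(X)^2]$, which the paper already assumes for the overshoot bound) when asserting $\mathbb{E}_1[\min_{k\ge 0}S_k]>-\infty$, and to verify $\mathbb{E}_1[T_{\texttt{SCUSUM}}]<\infty$ before applying Wald, as you note.
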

We first introduce a technical definition in order to apply~\cite[Corollary 2.2.]{woodroofe1982nonlinear} to the proof of Theorem 5.
\begin{definition}
A distribution $P$ on the Borel sets of $(-\infty, \infty)$ is said to be \textit{arithmetic} if and only if it concentrates on a set of points of the form $\pm nd$, where $d>0$ and $n=1, 2, \ldots$.
\end{definition}
\begin{remark}
    Any probability measure that is absolutely continuous with respect to the Lebesgue measure is non-arithmetic.
\end{remark}
\begin{proof}
Consider the random walk that is defined by 
\begin{equation*}
    Z^{\prime}(n) = \sum_{i=1}^nz_{\lambda}(X_i), \; \text{for}\; n\geq 1.
\end{equation*}
We examine another stopping time that is given by
\begin{equation*}
     T_{\texttt{SCUSUM}}^{\prime} \de \inf \{n\geq 1: Z^{\prime}(n) \geq \tau\}.
\end{equation*}
Next, for any $\tau$, define $R_{\tau}$ on $\{T_{\texttt{SCUSUM}}^{\prime} <\infty\}$ by 
\begin{equation*}
    R_{\tau} \de Z^{\prime}(T_{\texttt{SCUSUM}}^{\prime}) -\tau.
\end{equation*}
$R_{\tau}$ is the excess of the random walk over a stopping threshold $\tau$ at the stopping time $T_{\texttt{SCUSUM}}^{\prime}$.
Suppose the change point $\nu =1$, then $X_1, X_2,\ldots, $ are i.i.d. following the distribution $P_1$. Let $\mu$ and $\sigma^2$ respectively denote the mean $\mathbb{E}_{1}[z_{\lambda}(X)]$ and the variance $\text{Var}_1[z_{\lambda}(X)]$. Note that 
\begin{equation*}
    \mu =\mathbb{E}_{1}[z_{\lambda}(X)]= \lambda\mathbb{D}_{\texttt{F}}(P_1\|P_{\infty})>0,
\end{equation*}
and \begin{equation*}
    \sigma^2 = \text{Var}_1[z_{\lambda}(X)] = \mathbb{E}_1[z_{\lambda}(X)^2]-\left(\lambda\mathbb{D}_{\texttt{F}}(P_1\|P_{\infty})\right)^2.
\end{equation*}
Under the mild regularity conditions
given by \citet{hyvarinen2005estimation},
\begin{align*}
&\mathbb{E}_{1}[\mathcal{S}_{\texttt{H}}(X, P_{\infty})]^2 < \infty,\;\text{and} \\
&\mathbb{E}_{1}[\mathcal{S}_{\texttt{H}}(X, P_{1})]^2 < \infty.
\end{align*}
It implies that $\mathbb{E}_1[z_{\lambda}(X)^2]<\infty$ if $\lambda$ is chosen appropriately, e.g. $\lambda$ satisfy Inequality~(\ref{eq: condition}) and $\lambda$ is not arbitrary large. 
Therefore, by \citet[Theorem 1]{lorden1970excess},
\begin{equation*}
    \sup_{\tau \geq 0}\mathbb{E}_1[R_{\tau}]\leq \frac{\mathbb{E}_1[(z_{\lambda}(X)^{+})^2]}{\mathbb{E}_1[z_{\lambda}(X)]}\leq \frac{\mu^2+\sigma^2}{\mu},
\end{equation*}
where $z_{\lambda}(X)^{+} = \max (z_{\lambda}(X), 0)$.
Additionally,  $P_1$ must be non-arithmetic in order to have Hyv\"arinen scores well-defined. Hence, by \citet[Corollary 2.2.]{woodroofe1982nonlinear},
\begin{equation*}
    \mathbb{E}_{1}[T^{\prime}_{\texttt{SCUSUM}}]=\frac{\tau}{\mu}+\frac{\mathbb{E}_1[{R_{\tau}}]}{\mu}\leq \frac{\tau}{\mu}+\frac{\mu^2+\sigma^2}{\mu^2},\;\forall \tau \geq 0.
\end{equation*}
Observe that for any $n$, $Z^{\prime}(n)\leq Z(n)$, and therefore $T_{\texttt{SCUSUM}} \leq T_{\texttt{SCUSUM}}^{\prime}$. Thus, 
\begin{equation}
\label{eq:cadd_result}
    \mathbb{E}_{1}[T_{\texttt{SCUSUM}}]\leq \mathbb{E}_{1}[T_{\texttt{SCUSUM}}^{\prime}]\leq \frac{\tau}{\mu}+\frac{\mu^2+\sigma^2}{\mu^2},\;\forall \tau \geq 0.
\end{equation}
By Theorem 4, we select $\tau = \log \gamma $ to satisfy the constraint $\mathbb{E}_{\infty}[T_{\texttt{SCUSUM}}]\geq\gamma>0$. Plugging it back to Equation~(\ref{eq:cadd_result}), we conclude that, as $\gamma \to \infty$,
\begin{equation*}
    \mathbb{E}_{1}[T_{\texttt{SCUSUM}}] \sim \frac{\log \gamma}{\mu}=\frac{\log \gamma}{\lambda\mathbb{D}_{\texttt{F}}(P_{1}||P_{\infty})},
\end{equation*}
to complete the proof.

Due to the stopping scheme of SCUSUM, the expected time $\mathbb{E}_{\nu}[T_{\texttt{SCUSUM}}-\nu|T_{\texttt{SCUSUM}}\geq \nu]$ is independent of the change point $\nu$ (This is obvious, and the same property for CUSUM has been shown by~\citet{xie2021sequential}). Let $\nu = 1$, and we have \begin{equation*}
    \mathcal{L}_{\texttt{CADD}}(T_{\texttt{SCUSUM}}) = \mathbb{E}_{1}[T_{\texttt{SCUSUM}}]-1.
\end{equation*}
Thus, we conclude that 
\begin{equation*}
    \mathcal{L}_{\texttt{CADD}}(T_{\texttt{SCUSUM}})\sim \frac{\log \gamma}{\lambda \mathbb{D}_{\texttt{F}}(P_{1}\|P_{\infty})}.
\end{equation*}
Similar arguments applies for $\mathcal{L}_{\texttt{WADD}}(T_{\texttt{SCUSUM}})$.
\end{proof}
The value $\mathbb{E}_{1}[T_{\texttt{SCUSUM}}]$ is also referred to as the \textit{Expected Detection Delay} (EDD) in the literature. Theorems~\ref{thm:arl} and~\ref{thm:cond_edd} imply that the EDD increases linearly as the stop threshold $\tau$ increases subject to a constraint on ARL.
\begin{remark}
\label{remark: lambda}
It is worth noting that although results of our core results hold for a pre-selected $\lambda$ that satisfied the Inequality~(\ref{eq: condition}), the effect of choosing any other $\lambda^{\prime}$ amounts to the scaling of all the increments of SCUSUM by a constant factor of $\lambda^{\prime}/ \lambda$. This means that all of these results still hold adjusted for this scale factor. For instance, the result of Theorem \ref{thm:arl} can be modified to be written as $$\mathbb{E}_{\infty}[T_{\texttt{SCUSUM}}]\geq \exp \left\{\frac{\lambda  \tau}{\max(\lambda, \lambda^{\prime})}\right\},$$ for any $\lambda^{\prime} > 0$. It is easy to see that this scaling will change the statement of Theorem \ref{thm:cond_edd} accordingly to $$\mathbb{E}_{1}[T_{\texttt{SCUSUM}}]\sim \frac{\max(\lambda, \lambda^{\prime})}{\lambda }\frac{\log \gamma}{\lambda^{\prime}\mathbb{D}_{\texttt{F}}(P_{1}||P_{\infty})},$$ as $\gamma \to \infty$. In order to have the strongest results in Theorems~\ref{thm:arl} and \ref{thm:cond_edd}, we must choose $\lambda$ as close to $\lambda^*$ as possible.
\end{remark}

In the end, we consider a special case where pre- and post-change distributions are both multivariate Normal distributions. In this case, SCUSUM attains the asymptotic optimality in the sense of Pollak's and Lorden's metrics.
\begin{proposition}[Multivariate Normal Pre- and Post-change Distributions]
\label{prop: gaussian}
Assume that $X_1, \cdots, X_{\nu-1} \sim N(\boldsymbol{\theta}_0, \Sigma)$, and $X_{\nu}, X_{\nu+1}, \cdots \sim N(\boldsymbol{\theta}_1, \Sigma)$. Suppose $\boldsymbol{\theta}_0, \boldsymbol{\theta}_1\in \Theta\subset\mathbb{R}^{d}$ and $\Sigma\in \mathbb{R}^{d\times d}$ are known, and $\Sigma=\sigma_{\text{c}}\mathbf{I}_d$ where the scalar $\sigma_{\text{c}}>0$. Then the stopping rule $T_{\texttt{SCUSUM}}$ achieves the asymptotic optimality of Problem~(\ref{eq: wadd}) and Problem~(\ref{eq:pollak}) when $\gamma\to \infty$, namely, as $\gamma \to \infty$,
\begin{align}
    \mathcal{L}_{\texttt{WADD}}(T_{\texttt{SCUSUM}}) \sim\mathcal{L}_{\texttt{CADD}}(T_{\texttt{SCUSUM}})&\sim \frac{\log \gamma}{\mathbb{D}_{\texttt{KL}}(N(\boldsymbol{\theta}_1, \Sigma)\|N(\boldsymbol{\theta}_0, \Sigma))},
\end{align}
under the constraint that $\mathbb{E}_{\infty}[T_{\texttt{SCUSUM}}]\geq \gamma>0$.
\end{proposition}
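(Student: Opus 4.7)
The plan is to apply Theorem~\ref{thm:cond_edd} with a well-chosen $\lambda$, and to establish that for Gaussians with common covariance $\sigma_{\text{c}} \mathbf{I}_d$, there is a value $\lambda^\star > 0$ satisfying Inequality~(\ref{eq: condition}) such that $\lambda^\star \mathbb{D}_{\texttt{F}}(P_1\|P_\infty)$ equals $\mathbb{D}_{\texttt{KL}}(N(\boldsymbol{\theta}_1,\Sigma)\|N(\boldsymbol{\theta}_0,\Sigma))$. Once this identity is in hand, Theorem~\ref{thm:cond_edd} directly yields the stated asymptotic for both $\mathcal{L}_{\texttt{WADD}}$ and $\mathcal{L}_{\texttt{CADD}}$, matching the CUSUM rate in Equation~\eqref{eq:optimality_cusum}. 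The entire problem is thereby reduced to three explicit Gaussian computations.

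First, I would compute the Hyv\"arinen score difference. Since $\log p_i(x) = -\|x-\boldsymbol{\theta}_i\|^2/(2\sigma_{\text{c}}) + \mathrm{const}$, direct differentiation gives $\nabla_x \log p_i(x) = -(x-\boldsymbol{\theta}_i)/\sigma_{\text{c}}$ and $\Delta_x \log p_i(x) = -d/\sigma_{\text{c}}$. Substituting into Equation~\eqref{eq:hyv_score} and taking the difference yields
\[
\mathcal{S}_{\texttt{H}}(X, P_\infty) - \mathcal{S}_{\texttt{H}}(X, P_1) = \frac{\|X-\boldsymbol{\theta}_0\|^2 - \|X-\boldsymbol{\theta}_1\|^2}{2\sigma_{\text{c}}^2},
\]
so $z_\lambda(X)$ is an affine functional of $X$, hence Gaussian under both $P_\infty$ and $P_1$.

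Second, I would identify $\lambda^\star$ using Lemma~\ref{lemma: lambda}. After the centering $Y = X - \boldsymbol{\theta}_0$, which is $N(\mathbf{0}, \sigma_{\text{c}} \mathbf{I}_d)$ under $P_\infty$, the instantaneous score simplifies to $z_\lambda(X) = (\lambda/\sigma_{\text{c}}^2)\, Y^\top(\boldsymbol{\theta}_1-\boldsymbol{\theta}_0) - \lambda\|\boldsymbol{\theta}_1-\boldsymbol{\theta}_0\|^2/(2\sigma_{\text{c}}^2)$. The scalar $Y^\top(\boldsymbol{\theta}_1-\boldsymbol{\theta}_0)$ is Gaussian with mean zero and variance $\sigma_{\text{c}}\|\boldsymbol{\theta}_1-\boldsymbol{\theta}_0\|^2$, and the one-dimensional Gaussian MGF gives
\[
\mathbb{E}_\infty[\exp(z_\lambda(X))] = \exp\!\left(\frac{\lambda(\lambda-\sigma_{\text{c}})\|\boldsymbol{\theta}_1-\boldsymbol{\theta}_0\|^2}{2\sigma_{\text{c}}^3}\right).
\]
The positive root of the corresponding equality in Inequality~\eqref{eq: condition} is therefore $\lambda^\star = \sigma_{\text{c}}$, placing us in Case 1 of Lemma~\ref{lemma: lambda}.

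Third, I would evaluate the two divergences. The identity of Subsection~\ref{subsec:fisher_hyvarinen} (or, equivalently, the drift formula of Lemma~\ref{lemma: drifts} applied to the affine $z_\lambda$) gives $\mathbb{D}_{\texttt{F}}(P_1\|P_\infty) = \|\boldsymbol{\theta}_1-\boldsymbol{\theta}_0\|^2/(2\sigma_{\text{c}}^2)$, while the standard formula for the KL divergence between Gaussians with common covariance yields $\mathbb{D}_{\texttt{KL}}(N(\boldsymbol{\theta}_1,\Sigma)\|N(\boldsymbol{\theta}_0,\Sigma)) = \|\boldsymbol{\theta}_1-\boldsymbol{\theta}_0\|^2/(2\sigma_{\text{c}})$. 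Multiplying, $\lambda^\star \mathbb{D}_{\texttt{F}}(P_1\|P_\infty) = \mathbb{D}_{\texttt{KL}}(N(\boldsymbol{\theta}_1,\Sigma)\|N(\boldsymbol{\theta}_0,\Sigma))$, and substitution into Theorem~\ref{thm:cond_edd} with $\lambda = \lambda^\star$ completes the proof. The main obstacle is the MGF bookkeeping in the second step, where several linear and quadratic cross terms in $\boldsymbol{\theta}_0, \boldsymbol{\theta}_1$ must cancel; the centering $Y = X - \boldsymbol{\theta}_0$ is the cleanest way to ensure this, since it eliminates all constants involving $\boldsymbol{\theta}_0$ and reduces the computation to the MGF of a mean-zero Gaussian scalar. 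As a by-product, the equality $\lambda^\star \mathbb{D}_{\texttt{F}} = \mathbb{D}_{\texttt{KL}}$ explains conceptually why SCUSUM inherits CUSUM's asymptotic optimality in this special case.
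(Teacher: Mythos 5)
Your proposal is correct and follows essentially the same route as the paper: compute $\mathbb{E}_{\infty}[\exp(z_{\lambda}(X))]$ in closed form, identify the positive root $\lambda^{\star}$ of the equality in Inequality~(\ref{eq: condition}), verify the identity $\lambda^{\star}\mathbb{D}_{\texttt{F}}(P_1\|P_{\infty})=\mathbb{D}_{\texttt{KL}}(P_1\|P_{\infty})$, and invoke Theorem~\ref{thm:cond_edd}. The only difference is that you specialize to $\Sigma=\sigma_{\text{c}}\mathbf{I}_d$ from the outset (obtaining $\lambda^{\star}=\sigma_{\text{c}}$ directly), whereas the paper carries out the computation for a general covariance and uses the Cauchy--Schwarz inequality to show $\lambda\mathbb{D}_{\texttt{F}}\leq \mathbb{D}_{\texttt{KL}}$ with equality precisely in the scalar-covariance case; this forgoes the extra information about when SCUSUM is strictly suboptimal, but it is all the stated proposition requires.
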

\begin{proof}
By direct computation, it can see that
\begin{equation*}
    z_{\lambda}(X) = \lambda \left(
    -\frac{1}{2}(X-\boldsymbol{\theta}_0)^T\Sigma^{-2}(X-\boldsymbol{\theta}_0)+\frac{1}{2}(X-\boldsymbol{\theta}_1)^T\Sigma^{-2}(X-\boldsymbol{\theta}_1)
    \right),
\end{equation*}
where $\Sigma^{-2}$ is a short notation for $\Sigma^{-1}\cdot\Sigma^{-1}$. Then 
\begin{multline*}
    \mathbb{E}_{\infty}[\exp (z_{\lambda}(X))]
    =\int_{X\in\mathcal{X}}\frac{1}{\sqrt{2\pi}\operatorname{det}(\Sigma)}\exp \left(-\frac{1}{2}(X-\boldsymbol{\theta}_0)^T\Sigma^{-1}(X-\boldsymbol{\theta}_0)+\frac{\lambda}{2}(X-\boldsymbol{\theta}_0)^T\Sigma^{-2}(X-\boldsymbol{\theta}_0)\right.\\
    \left.-\frac{\lambda}{2}(X-\boldsymbol{\theta}_1)^T\Sigma^{-2}(X-\boldsymbol{\theta}_1)
    \right)dX.
\end{multline*}
The above integral can be calculated to be
\begin{equation*}
    \mathbb{E}_{\infty}[\exp (z_{\lambda}(X))]
    = \exp( - \lambda^2 (\boldsymbol{\theta}_0 - \boldsymbol{\theta}_1)^T \Sigma^{-3} (\boldsymbol{\theta}_0 - \boldsymbol{\theta}_1) + \lambda  (\boldsymbol{\theta}_0 - \boldsymbol{\theta}_1)^T \Sigma^{-2} (\boldsymbol{\theta}_0 - \boldsymbol{\theta}_1) ) .
\end{equation*}
Clearly, $\mathbb{E}_{\infty}[\exp (z_{\lambda}(X))] = 1$ if 
$$\lambda =   \frac{ (\boldsymbol{\theta}_0 - \boldsymbol{\theta}_1)^T \Sigma^{-2} (\boldsymbol{\theta}_0 -  \boldsymbol{\theta}_1) }    {(\boldsymbol{\theta}_0 - \boldsymbol{\theta}_1)^T \Sigma^{-3} (\boldsymbol{\theta}_0 -  \boldsymbol{\theta}_1)}.$$
The Fisher divergence and KL divergence between two Normal distributions can be calculated by
$$\mathbb{D}_{\texttt{F}}(\mathcal{N}(\boldsymbol{\theta}_1, \Sigma)||\mathcal{N}(\boldsymbol{\theta}_0, \Sigma)) = (\boldsymbol{\theta}_0 - \boldsymbol{\theta}_1)^T \Sigma^{-2} (\boldsymbol{\theta}_0 -  \boldsymbol{\theta}_1),$$
and
$$\mathbb{D}_{\texttt{KL}}(\mathcal{N}(\boldsymbol{\theta}_1, \Sigma)||\mathcal{N}(\boldsymbol{\theta}_0, \Sigma)) = (\boldsymbol{\theta}_0 - \boldsymbol{\theta}_1)^T \Sigma^{-1} (\boldsymbol{\theta}_0 -  \boldsymbol{\theta}_1),$$ respectively.
Thus \begin{equation*}
    \frac{ \lambda \mathbb{D}_{\texttt{F}}(P_{1}||P_{\infty}) }
{\mathbb{D}_{\texttt{KL}}(P_{1}||P_{\infty}) } = \frac{ [(\boldsymbol{\theta}_0 - \boldsymbol{\theta}_1)^T \Sigma^{-2} (\boldsymbol{\theta}_0 -  \boldsymbol{\theta}_1)]^2 } {[(\boldsymbol{\theta}_0 - \boldsymbol{\theta}_1)^T \Sigma^{-3} (\boldsymbol{\theta}_0 -  \boldsymbol{\theta}_1) ] [(\boldsymbol{\theta}_0 - \boldsymbol{\theta}_1)^T \Sigma^{-1} (\boldsymbol{\theta}_0 -  \boldsymbol{\theta}_1)]}.
\end{equation*}
Let $\{v_1, v_2, \cdots, v_d\}$ denote an orthonormal basis of eigenvectors of $\Sigma$, corresponding to its eigenvalues $\{\sigma_1, \sigma_2, \cdots, \sigma_d\}$. We can write $(\boldsymbol{\theta}_0 - \boldsymbol{\theta}_1)$ in this orthonormal basis as
$$(\boldsymbol{\theta}_0 - \boldsymbol{\theta}_1) = \sum_{k=1}^{d} c_k v_k.$$
Then, it follows from direct calculations that
$$\frac{ \lambda \mathbb{D}_{\texttt{F}}(P_{1}||P_{\infty}) }
{\mathbb{D}_{\texttt{KL}}(P_{1}||P_{\infty}) } = \frac{ ( \sum_{k=1}^{d} \frac{c_k^2}{\sigma_k^2})^2  } { ( \sum_{k=1}^{d} \frac{c_k^2}{\sigma_k^3})  ( \sum_{k=1}^{d} \frac{c_k^2}{\sigma_k} )}.$$
Applying the Cauchy-Schwarz inequality, we have
$$\lambda \mathbb{D}_{\texttt{F}}(P_{1}||P_{\infty}) \leq \mathbb{D}_{\texttt{KL}}(P_{1}||P_{\infty}),$$ 
with equality if and only if all the eigenvalues $\sigma_i, \, i=1, 2, \cdots, d$ for $c_i \ne 0$ are equal. In particular, in the case when $\Sigma$ is a scalar matrix, $\lambda \mathbb{D}_{\texttt{F}}(P_{1}||P_{\infty}) =  \mathbb{D}_{\texttt{KL}}(P_{1}||P_{\infty})$, and thus CUSUM and SCUSUM both achieve the same asymptotic performance.
\end{proof}
\begin{remark}
    We note that in the above Gaussian case (where the densities are normalized), whenever $$\lambda \mathbb{D}_{\texttt{F}}(P_{1}||P_{\infty}) <\mathbb{D}_{\texttt{KL}}(P_{1}||P_{\infty}),$$ the performance of CUSUM is superior to that of SCUSUM. However, CUSUM is not readily applicable to unnormalized models. This is a small penalty that SCUSUM pays in order to unleash its computational advantages.
\end{remark}

\section{Numerical Results}
\label{sec: results}
\noindent In this section, we conduct extensive numerical experiments on synthetic data to compare the performance of our method with various change detection algorithms.

\subsection{Experimental Setup}
\paragraph{Dataset} We simulate synthetic data streams from multivariate Normal distribution (MVN), a subfamily~\citep{yu2016statistical} of the exponential family (EXP), and the Gauss-Bernoulli Restricted Boltzmann Machine (GB-RBM)~\citep{LeCun2006ATO}. For the exponential family, we use the Hamiltonian Monte Carlo (HMC) to generate samples from the unnormalized models. We compute the normalizing constant by numerical integration to perform CUSUM based on log-likelihood. It is worth noting that this calculation is intractable when the dimension of EXP becomes large. The samples of GB-RBM are drawn using Gibbs sampling with $1000$ iterations to ensure convergence. We do not provide the results of CUSUM for GB-RBM because the exact log-likelihood of GB-RBM is hard to compute. Further details of the distributions can be found in Subsection~\ref{subsec: distributions}.
\paragraph{Baseline} We evaluate the performance in terms of empirical ARL and empirical CADD, where ARL and CADD are given by $\mathbb{E}_{\infty}[T]$ and $\mathbb{E}_{\nu}[T-\nu|T\geq \nu]$, respectively. When there is no change, we expect a large value of empirical ARL; when a change occurs, we expect a small value of empirical CADD. All the results of empirical CADD and empirical ARL are reported in a log scale. In all experiments, we set the change point as $\nu=500$. To make sure the data stream is long enough for detection, we fixed the total length as $10000$. The values of ARL range from $500$ to $20000$. Their theoretical properties have been discussed in Section~\ref{sec: SCUSUM}.

We compare the performance of SCUSUM with three other methods: CUSUM~\citep{page1955test}, Scan B-statistic~\citep{li2019scan, li2015m}, and CALM-MMD~\citep{cobb2022sequential}. For CUSUM and SCUSUM, we follow Algorithm~\ref{algm:scusum}. For a fixed ARL, the stopping threshold is selected by $\tau = \log$(ARL) according to Equation~(\ref{eq:optimality_cusum}). The Scan B-statistic algorithm was motivated by the B-statistic~\citep{zaremba2013b}. It is defined by the kernelized maximum mean discrepancy (MMD) between sliding bootstrap blocks of the data stream. \citet{cobb2022sequential} proposed a dynamic threshold-selecting algorithm, named CALM, which is applicable to most two-sample tests-like change detection methods. The CALM-MMD algorithm is returned by applying the CALM procedure to the kernelized two-sample MMD statistic~\citep{gretton2012kernel}. We implement the Scan B-statistic and CALM-MMD algorithms with the code released by~\cite{cobb2022sequential}. Both of these are kernelized MMD-based methods where the Gaussian radial basis function (RBF) kernel is employed. Their stopping thresholds are selected by past observations empirically, which can lead to significant miscalibration in practice, as shown by~\cite{cobb2022sequential} and later in our numerical results.

Other than the evaluation of the trade-off between ARL and CADD, we also investigate the performance of change detection in cases of slight changes, meaning that the pre- and post-change distributions are very close to each other. The closeness is measured by the magnitude of parameter drifts. Here, we run experiments by fixing the pre-change distribution and constructing the post-change distribution by perturbing the parameters of the pre-change distribution. For different families of distributions, we consider different magnitudes of perturbations. We repeat each experiment for $100$ trials.

\subsection{Synthetic Dataset}
\label{subsec: distributions}
\paragraph{Multivariate Normal Distribution (MVN)} We consider the multivariate normal distribution. Let $\boldsymbol{\mu}$ and $\Sigma$ respectively denote the mean and the covariance matrix. The corresponding score function is calculated by
\begin{equation*}
    S_{\texttt{H}}(X, P) = \frac{1}{2}(X-\boldsymbol{\mu})^{T}\Sigma^{-2}(X-\boldsymbol{\mu})-\operatorname{tr}(\Sigma^{-1}),
\end{equation*}
where the operator $\operatorname{tr}(\cdot)$ takes the trace of matrix.

We consider the pre-change distribution with mean $\boldsymbol{\mu}=(0,0)^T$ and covariance matrix $\Sigma = \left(\begin{matrix}
    1, &0.5\\
    0.5, &1
\end{matrix}\right)$. For the post-change distribution, we first investigate the scenario of mean shifts by fixing the covariance matrix $\Sigma = \left(\begin{matrix}
    1, &0.5\\
    0.5, &1
\end{matrix}\right)$ and assigning post-change means $\boldsymbol{\mu} = (0,0)^T+\epsilon_{\mu}$, where $+$ here is element-wise plus and $\epsilon_{\mu}$ is the perturbations of $\boldsymbol{\mu}$. 
We take values of $\epsilon_{\mu}$ from $0$ to $0.5$ with step size $0.05$. Next, we consider the case of covariance shifts. In this scenario, we fix the post-change mean as $\boldsymbol{\mu} = (0,0)^T$ and assign post-change covariance by $\Sigma = \left(\begin{matrix}
    1, &0.5\\
    0.5, &1
\end{matrix}\right)\circ\exp(\epsilon_{\log(\sigma^2)})$, where $\circ$ denotes the element-wise product and $\epsilon_{\log(\sigma^2)}$ denotes the element-wise perturbations of the covariance matrix. To make the perturbed covariance matrix positive-definite, we perturb the log of each component of the covariance matrix. We take the value of $\epsilon_{\log(\sigma^2)}$ vary from $0.05$ to $0.5$ by a step size $0.05$.

\paragraph{Exponential Family (EXP)} As introduced in Subsection~\ref{subsec:issues_llr_cusum}, we consider the Exponential family with the associated PDF given by
\begin{align}
    p_{\tau}(X) =\frac{1}{Z_{\tau}} \exp\left\{-\tau\left(\sum_{i=1}^dx_i^4+\sum_{1\leq i\leq d, i\leq j\leq d}x_i^2x_j^2\right)\right\}.\nonumber
\end{align}
The associated Hyvarinen score function is calculated by
\begin{equation*}
    S_{\texttt{H}}(X, P_{\tau}) = \frac{1}{2}\sum_{i=1}^d \left(\frac{\partial}{\partial x_i}\log p_{\tau}(X)\right)^2+\sum_{i=1}^d\frac{\partial^2}{\partial x_i}\log p_{\tau}(X),
\end{equation*}
where 
\begin{align*}
    \frac{\partial}{\partial x_i}\log p_{\tau}(X) &= -\tau \left(4x_i^3+2\sum_{1\leq i\leq d, i\leq j\leq d}x_ix_j^2\right), \;\text{and}\\
    \frac{\partial^2}{\partial x_i}\log p_{\tau}(X)&=-\tau \left (12x_i^2+2\sum_{1\leq i\leq d, i\leq j\leq d}x_j^2\right).
\end{align*}
We consider the pre-change distribution with $\tau =1$ and post-change distribution with $\tau =1+\epsilon_{\tau}$, where $\epsilon_{\tau}$ denotes the perturbations of the scale parameter $\tau$. We take values of $\epsilon_{\tau}$ from $0.1$ to $2.0$ by a step size $0.1$.

\paragraph{Gauss-Bernoulli Restricted Boltzmann Machine (GB-RBM)} As introduced in Subsection~\ref{subsec:issues_llr_cusum}, we consider the GB-RBM mode with the PDF given by $p_{\theta}(X)= \sum_{h\in \{0,1\}^{d_h}}p_{\theta}(X, H) = \frac{1}{Z_{\theta}}\exp\{-F_{\theta}(X)\}$, where $F_{\theta}(X)$ is the free energy given by 
\begin{equation*}
    F_{\theta}(X) = \frac{1}{2}\sum_{i=1}^{d_x} (x_{i}-b_i)^{2}\nonumber
    -\sum_{j=1}^{d_h} \operatorname{Softplus}\left(\sum_{i=1}^{d_x} W_{i j}x_{i}+b_{j}\right).
\end{equation*}
We compute the corresponding Hyv\"arinen score in a closed form
\begin{equation*}
    S_{\texttt{H}}(X, P_{\theta})= \sum_{i=1}^{d_x}\left[\frac{1}{2}\left(x_{i}-b_{i}+\sum_{j=1}^{d_h} W_{ij} \phi_{j}\right)^2+\sum_{j=1}^{d_h} W_{i j}^{2} \phi_{j}\left(1-\phi_{j}\right)-1\right],
\end{equation*}
where $\phi_{j} \de \operatorname{Sigmoid}(\sum_{i=1}^{d_x} W_{i j}x_{i}+b_{j})$. The $\operatorname{Sigmoid}$ function is defined as $\operatorname{Sigmoid}(y) \de (1+\exp(-y))^{-1}$.

The pre-change distribution is with the parameters $\mathbf{W} = \mathbf{W}_0$, $\mathbf{b}=\mathbf{b}_0$, and $\mathbf{c}=\mathbf{c}_0$, where each component of $\mathbf{W}_0$, $\mathbf{b}_0$, and $\mathbf{c}_0$ is randomly drawn from the standard Normal distribution $\mathcal{N}(0,1)$. For the post-change distribution, we assign the parameters $\mathbf{W}=\mathbf{W}_0+\epsilon_{\mathbf{W}}$, $\mathbf{b}=\mathbf{b}_0$, and $\mathbf{c}=\mathbf{c}_0$. Here, we only consider the shift of weight matrix $\mathbf{W}$, denoted as $\epsilon_{\mathbf{W}}$. Each component of $\epsilon_{\mathbf{W}}$ is drawn from $\mathcal{N}(0, \sigma_{\epsilon}^2)$. We let $\sigma_{\epsilon}$ take values from $0.005$ to $0.1$ with step size $0.005$.

\subsection{Experimental Results}
\paragraph{Detection Score} 
We illustrate instantaneous detection scores at time steps in Figure~\ref{fig:scores_t}. We control ARL to be fixed as $2000$. The data streams are generated from bivariate Normal distributions (MVN-$\epsilon _{\mu}$) with a mean drift $\epsilon _{\mu}=0.3$ at time $t=500$.  We report the averaged detection scores, marked as solid lines, and standard errors, marked as shadow intervals. As presented in Figure~\ref{fig:scores_t}, at the change point, both CUSUM and SCUSUM react immediately after the change occurs. In contrast, the detection scores of Scan B-statistic and CALM-MMD swing between the range of values $0$ and~$1$. In this case, the two MMD-based methods fail in detection. In particular, the detection scores of CUSUM and SCUSUM monotonically increase after the change happens. However, the detection scores of Scan B-statistic and CALM-MMD maintain a stable level after the change happens. Therefore, the results demonstrate that Scan B-statistic and CALM-MMD may fail to reach the threshold even after a sufficient number of time steps. 
\begin{figure*}[htbp]
\centering
 \includegraphics[width=0.9\linewidth]{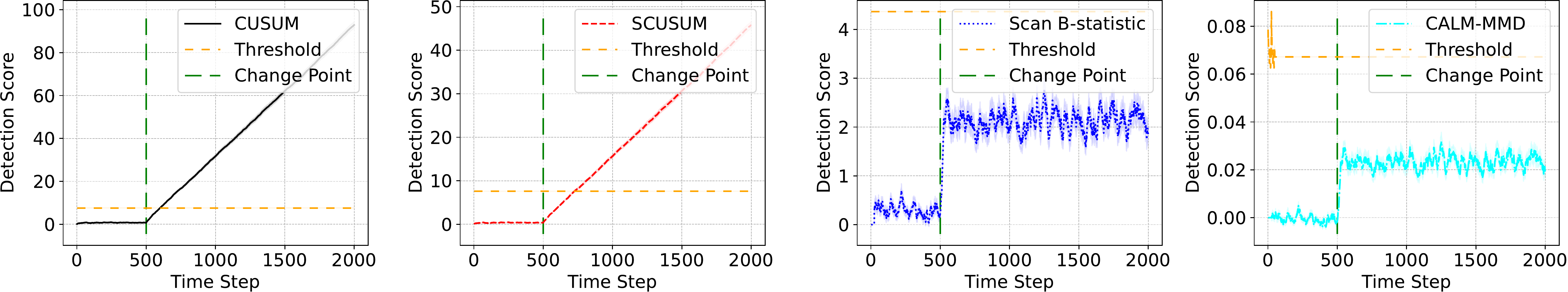}
   \caption{The results of Detection Score (before and after change) with MVN ($\epsilon_{\mu} = 0.3$) and ARL$=2000$.}
 \label{fig:scores_t}
\end{figure*}

\paragraph{Empirical CADD against ARL} In Figure~\ref{fig:cadd_arl}, we illustrate the empirical CADD against ARL in cases of bivariate Normal distribution mean drifts (MVN-$\epsilon _{\mu}$), bivariate Normal distribution covariance drifts (MVN-$\epsilon_{\log(\sigma^2)}$), scale parameter drifts of an exponential family (EXP-$\epsilon_{\tau}$), and weight matrix drifts of the GB-RBM (GB-RBM-$\sigma_{\epsilon}$), respectively. The notations $\epsilon _{\mu}$, $\epsilon_{\log(\sigma^2)}$, $\epsilon_{\tau}$, and $\sigma_{\epsilon}$ denote the magnitude of shits of the MVN mean, MVN covariance matrix, EXP scale parameter, and GB-RBM weight matrix, respectively. The results demonstrate that our proposed SCUSUM performs competitively with CUSUM in terms of empirical CADD against ARL. In particular, we see the red lines (SCUSUM) and the black lines (CUSUM) are in parallel, meaning that the empirical CADD of SCUSUM increases at a similar rate as that of CUSUM. Furthermore, SCUSUM can also outperform CUSUM for a fixed ARL in Figures~\ref{fig:cadd_arl}(b, c). 
\begin{figure}[htbp]
\centering
 \includegraphics[width=0.9\linewidth]{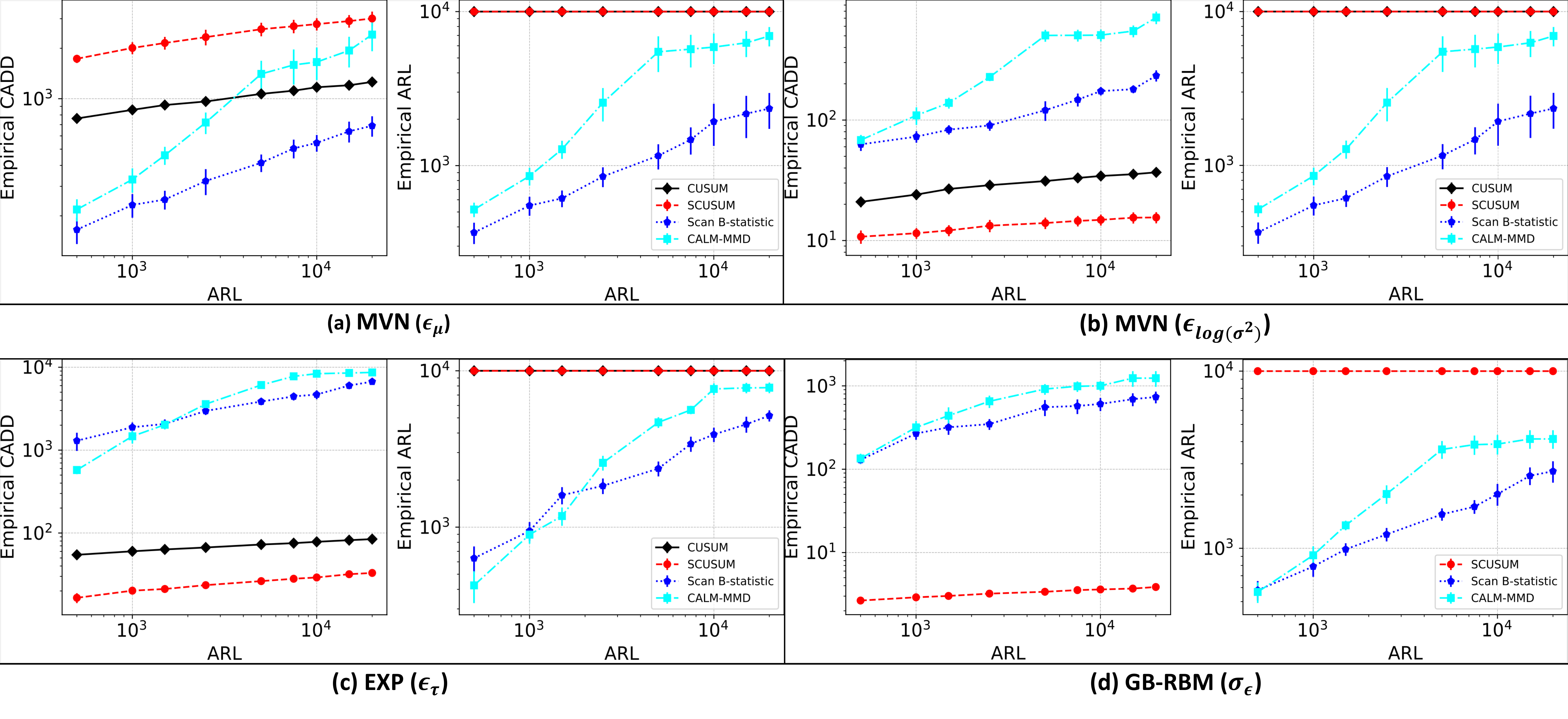}
   \caption{Empirical CADD against ARL and Empirical ARL against ARL for MVN ($\epsilon_{\mu}=0.1$), MVN ($\epsilon_{\log(\sigma^2)} = 0.5$), EXP ($\epsilon_{\tau} = 1.0$), and GB-RBM ($\sigma_{\epsilon} = 0.05$).}
 \label{fig:cadd_arl}
\end{figure}

The right columns of each subfigure in Figure~\ref{fig:cadd_arl} illustrate empirical ARL against ARL when no change happens throughout all time steps. The results demonstrate that CUSUM and SCUSUM can successfully control the false alarm rate, while MMD-based methods fail to do so. For the Normal distribution mean shifts, Scan B-statistic and CALM-MMD perform better than CUSUM and SCUSUM at low values of ARL. However, we point out that this gain is due to an out-of-control of false alarms, as illustrated in the right columns of Figure~\ref{fig:cadd_arl}(a). Furthermore, MMD-based methods not only fail to control false alarms but also perform worse than CUSUM and SCUSUM, as illustrated in Figures~\ref{fig:cadd_arl}(b-d). 

\paragraph{Empirical CADD against Changes}
We investigate the performance of the detection methods in cases of slight changes in Figure~\ref{fig:arl_perturb}, namely, pre- and post-change distributions are very close to each other. In the scenario of slight changes, CUSUM and SCUSUM perform better than MMD-based methods in Figures~\ref{fig:arl_perturb}(b-d). In particular, CUSUM and SCUSUM have much smaller empirical CADD when the magnitude of changes increases. Although MMD-based methods perform better than CUSUM and SCUSUM in Figure~\ref{fig:arl_perturb}(a), it is worth noting that it comes to the cost of out-of-control of false alarms as illustrated in Figure~\ref{fig:cadd_arl}.
\begin{figure}[htbp]
\centering
    \includegraphics[width=0.9\linewidth]{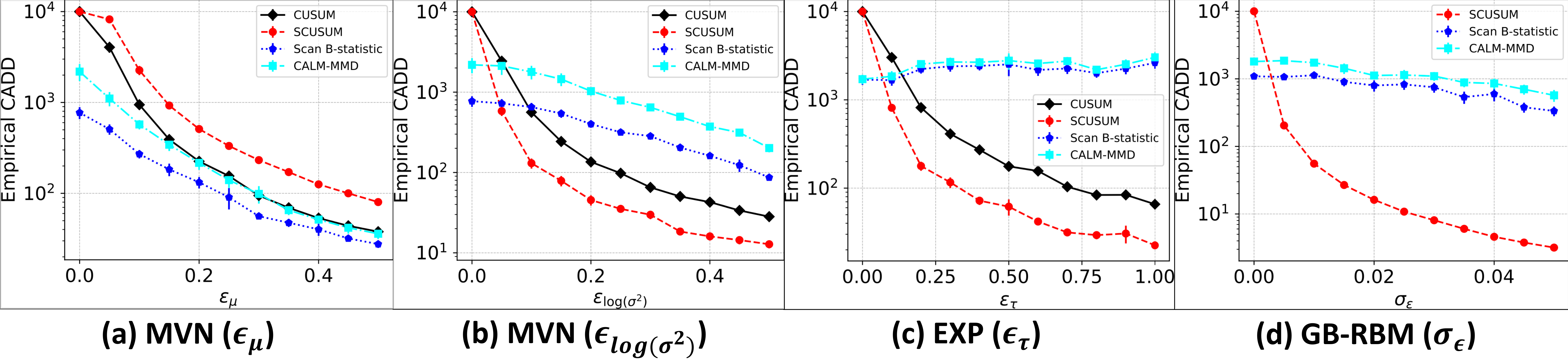}
    \caption{Empirical CADD against perturbations with ARL$=2000$}
    \label{fig:arl_perturb}
\end{figure}

\paragraph{Computation} We compare SCUSUM with other baselines in terms of computational costs by varying the dimensions of the EXP dataset. The computational cost is evaluated by the run time needed for the algorithm to stop detecting the change given one trial of the online data stream. Note that the normalizing constant for the exact likelihood is calculated offline given the knowledge of pre- and post-change distributions. This offline computation time is added to the run time of CUSUM. In Table~\ref{tab:cpu}, we demonstrate that when the dimension grows from $2D$ to $4D$, the run time needed for CUSUM grows significantly. It is due to the numerical integration of the exact log-likelihood calculation. Meanwhile, the run time of SCUSUM slightly grows due to the calculation of the Hyv\"arinen score. The run time of MMD-based methods stays consistent as the dimension grows. CALM-MMD requires a much longer run time due to its computation of selecting candidate thresholds. 
\begin{table}[ht]
\centering
\caption{Running times (in seconds) of change detection of each trial (pre- and post-change distributions belong to the exponential family)}
\vspace{0.1cm}
\label{tab:cpu}
\resizebox{0.5\columnwidth}{!}{
\begin{tabular}{@{}ccccc@{}}
\toprule
Detection Algorithms & $d=1$ & $d=2$ & $d=3$ & $d=4$ \\ \midrule
CUSUM & 2.4 & 2.9 & 294.8 & 66409.2 \\
SCUSUM & 2.2 & 9.1 & 21.0 & 38.4 \\
Scan B-statistic & 8.1 & 8.2 & 8.2 & 8.3 \\
CALM-MMD & 111.9 & 111.4 & 110.2 & 111.0 \\ \bottomrule
\end{tabular}
}
\end{table}

\paragraph{The Choice of $\lambda$}
In practice, we choose $\lambda$ as the positive root of the function $\tilde{h}(\lambda)$, as defined by Equation~(\ref{eq:empirical_conditon})in Section~\ref{sec:theoritical_analysis}. Different samples of past observations may determine different values of $\lambda$, which can cause the inconsistent performance of SCUSUM. We next investigate this problem through numerical simulations. In Figure~\ref{fig: lambda} (a) to (d), the data streams are generated from MVNs with $\epsilon_{\mu}=0.1$, MVNs with $\epsilon_{\log (\sigma^2)}=0.5$, EXPs with $\epsilon_{\tau}=1$, and GB-RBMs with $\sigma_{\epsilon}=0.05$. The first columns of Figure~\ref{fig: lambda} illustrate values of determined $\lambda$ varying from the size of past observations. The second (and the third) columns of Figure~\ref{fig: lambda} report the empirical CADD (respectively the empirical ARL) of SCUSUM varying from the size of past observations. We report all values in averages over $100$ random runs with error bars. 

As Figure~\ref{fig: lambda} demonstrates, as long as $m$ is large enough, the value of $\lambda$ is not too sensitive to different samples. In particular, when $m>100$, we see small standard errors in Figue~\ref{fig: lambda}(a)-(c). Accordingly, the performance of SCUSUM in terms of the empirical CADD tends to be stable. Note that in the case of GB-RBM (as shown by Figure~\ref{fig: lambda}(d)), we take $\lambda=1$ when $m<300$. It is because we can not numerically find the positive root of Equation~(\ref{eq:empirical_conditon}) given a small size of past observations. Finally, as shown in Column 3 of Figue~\ref{fig: lambda}, the empirical ARL is consistently under control.
\begin{figure}
    \centering
    \includegraphics[width=0.9\linewidth]{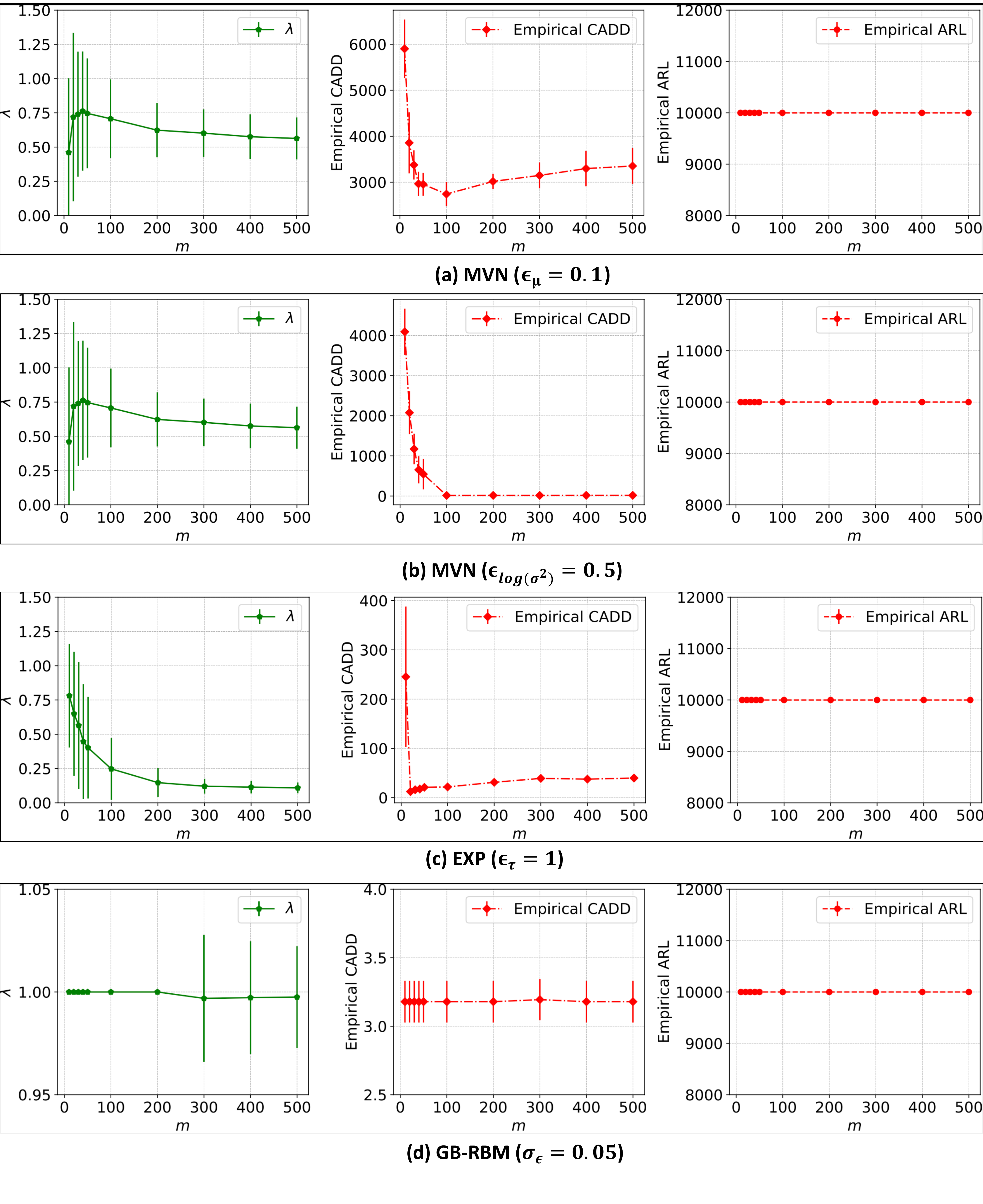}
    \caption{Column 1: $\lambda$ versus $m$; Column 2: Empirical CADD versus $m$; Column 3: Empirical ARL versus $m$.}
    \label{fig: lambda}
\end{figure}
\section{Concluding Remarks}
\label{sec: conclusion}
\noindent In this work, we proposed the SCUSUM algorithm to detect changes for unnormalized models. Our detection algorithm follows the classic CUSUM detection algorithm, sharing its computational advantage of recursive implementation. We analyzed the asymptotic properties of SCUSUM in the sense of Lorden's and Pollak's metrics. We also provided numerical results to demonstrate promising performance gains and reductions in computational complexity. In our future work, we will investigate the effect of relaxing the assumptions of knowing the post-change distribution and data independence. 


\appendices
\bibliographystyle{IEEEtranN}
\bibliography{IEEE TIT/reference}

\vfill

\end{document}